\title[Extending Universal Approximation Guarantees]{Extending Universal Approximation Guarantees: A Theoretical Justification for the Continuity of Real-World Learning Tasks }
\newcommand{\R}{\mathbb{R}}
\newcommand{\bra}[1]{\left[#1\right]}
\newcommand{\pa}[1]{\left(#1\right)}
\newcommand{\sett}[1]{\left\{ #1 \right\}}
\newcommand{\Q}{\mathbb{Q}}
\newcommand{\N}{\mathbb{N}}
\newcommand{\pr}[1]{\Pr\bra{#1}}
\newcommand{\ex}[1]{\mathrm{E}\bra{#1}}
\newcommand{\B}[1]{\boldsymbol{\mathbf{#1}}}
\newcommand{\norm}[1]{\left\lVert #1 \right\rVert}
\newcommand{\abs}[1]{\left| #1 \right|}
\newcommand{\Bor}{\mathcal{B}}
\newcommand{\F}{\mathcal F}
\let\Ginclude@graphics\@org@Ginclude@graphics 
\begin{document}

\maketitle

\begin{abstract}
  Universal Approximation Theorems establish the density of various classes of neural network function approximators in $C(K, \R^m)$, where $K \subset \R^n$ is compact. In this paper, we aim to extend these guarantees by establishing conditions on learning tasks that guarantee their continuity. We consider learning tasks given by conditional expectations $x \mapsto \ex{Y \mid X = x}$, where the learning target $Y = f \circ L$ is a potentially pathological transformation of some underlying data-generating process $L$. Under a factorization $L = T \circ W$ for the data-generating process where $T$ is thought of as a deterministic map acting on some random input $W$, we establish conditions (that might be easily verified using knowledge of $T$ alone) that guarantee the continuity of practically \textit{any} derived learning task $x \mapsto \ex{f \circ L \mid X = x}$. We motivate the realism of our conditions using the example of randomized stable matching, thus providing a theoretical justification for the continuity of real-world learning tasks.
\end{abstract}

\begin{keywords}
  Measure Theory, Continuity, Conditional Expectation, Universal Approximation
\end{keywords}

\section{Introduction}\label{sec:intro}

The expressive capabilities of neural network architectures have historically been understood through Universal Approximation Theorems. The classical result \citep{cybenko1989approximation, hornik1989multilayer, pinkus1999approximation} establishes the density of neural network function approximators with arbitrary width and bounded depth in $C(K, \R)$, where $K \subset \R^n$ is a compact set. Such density has more recently been established for classes of neural network function approximators of arbitrary depth and bounded width \citep{lu2017expressive, hanin2017approximating, kidger2020universal, park2020minimum}. 

The usefulness of Universal Approximation Theorems in understanding the practical success of neural networks hinges on a key assumption: that real-world learning tasks are continuous. In this paper, we provide a theoretical justification for this intuitive assumption. We consider tasks where the learner aims to predict a conditional expectation $x \mapsto \ex{Y \mid X = x}$. Such tasks commonly arise in both the regimes of regression and classification. In the regression case, the conditional expectation is the well-known minimizer of the mean-square error loss -- a loss commonly used in practice. In the classification case, when $Y$ is an indicator variable for one of $k$ disjoint events, the conditional expectation is equal to classification likelihood. We analyze the conditions for such learning tasks to be continuous as a function of $x$. 

In many cases, the learning target $Y$ can be thought of as some (potentially ill-behaved) transformation of a \textit{data-generating process} $L$. For example, in the well-known UCI Adult dataset, the learner aims to predict the odds of a person with features $X$ making an income above $\$50,000$. In this case, there is an underlying random variable $L$ denoting income. We then aim to learn $x \mapsto \ex{f \circ L \mid X = x}$, where $f$ denotes the indicator function for whether $L \ge 50000$. The function $f$, being an indicator function, is not continuous. However, the map $x \mapsto \ex{f \circ L \mid X = x}$ can empirically be seen to be continuous (over continuous features $X$), and indeed may be approximated well by continuous function approximators such as neural networks. 

In this paper, we seek to further justify the empirical success of neural network function approximators by explaining this behavior. We place a realistic regularity constraint on data-generating processes $L$ and show that any derived learning task $x \mapsto \ex{Y \mid X = x}$, where $Y = f \circ L$ for some nearly arbitrary $f$, is continuous. By applying existing universal approximation guarantees, we may establish the approximability of these tasks by neural network function approximators. 

To illustrate the nuance in this problem, we exhibit two seemingly similar data-generating processes $L_1$ and $L_2$ with different continuity properties. Let $X$ be a globally supported real-valued random variable, and let $R \sim U[0, 1]$ be independent from $X$. Let $L_1 := X + R$ denote the sum, and let $L_2 := XR$ denote the product. Finally, let $\operatorname{frac}(x) := x - \lfloor x \rfloor$ map real numbers to their fractional part. We plot the conditional expectations of $\operatorname{frac} \circ L_1$ and $\operatorname{frac} \circ L_2$.

\begin{figure}[h!]
    \centering
    \begin{minipage}{.49\textwidth}
        \centering
        \includegraphics[width=0.9\textwidth]{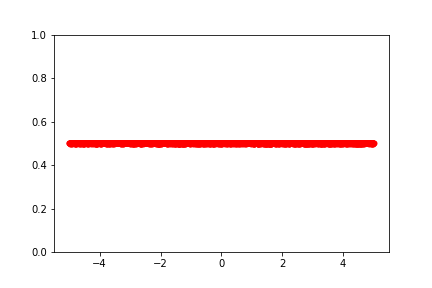}
        \caption{$\ex{\operatorname{frac}(X + R) \mid X = x}$ vs $x$}
        \label{fig:l1}
    \end{minipage} \begin{minipage}{0.49\textwidth}
        \centering
        \includegraphics[width=0.9\textwidth]{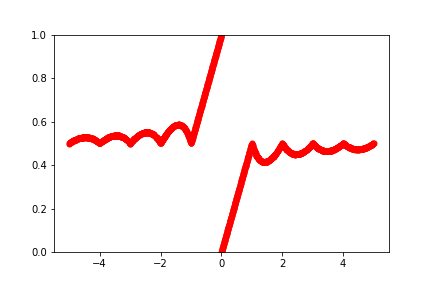}
        \caption{$\ex{\operatorname{frac}(XR) \mid X = x}$ vs $x$}
        \label{fig:l2}
    \end{minipage}
\end{figure}

As one would expect, the map $x \mapsto \ex{\operatorname{frac} \circ L_1 \mid X = x}$ is constant at $0.5$ (as depicted in Figure \ref{fig:l1}). We are simply taking the average fractional part of a $U[0,1]$ variable, and thus recover its expectation. However, the map $x \mapsto \ex{\operatorname{frac} \circ L_2 \mid X = x}$ has a clear discontinuity at $x=0$ (as depicted in Figure \ref{fig:l2}). At face value, the variables $L_1$ and $L_2$ appear similar: both maps $x \mapsto \ex{L_1 \mid X = x} = x + \frac12$ and $x \mapsto \ex{L_2 \mid X = x} = \frac{x}{2}$ are continuous functions of $x$. However, the continuity of the variable $L_1$ is more ``robust'' than that of $L_2$. Indeed, $L_1$ satisfies the \textit{continuous-regularity} property we define in Section \ref{sec:regularity}, and any essentially bounded transformation $f \circ L_1$ will also have continuous conditional expectation with respect to $x$. As demonstrated above, the same cannot be said about $L_2$. We return to this example in Section \ref{sec:apps}.

In Section \ref{sec:prelims}, we give a more formal measure-theoretic description of our problem. In Section \ref{sec:related}, we give an overview of related work. In Section \ref{sec:factor}, we give a useful factorization for data-generating processes $L$, and show that real-world data-generating process may be factored in this way. In Section \ref{sec:regularity}, we give our regularity constraint in terms of the aforementioned factorization, and show that it implies that learning tasks $x \mapsto \ex{f \circ L \mid X = x}$ are continuous, where $f$ is a nearly arbitrary function. Our constraint comes in two flavors, depending on whether $L$ is a discrete or continuous random variable. In Section \ref{sec:apps}, we demonstrate that our condition can be easy to show in practice, and therefore may be used to prove that a \textit{specific} learning task is continuous, even if knowledge of the underlying randomness is limited. We use the example of randomized stable matching, and show that learning tasks derived from match data are continuous.

\section{Preliminaries} \label{sec:prelims}

We restate the classical Universal Approximation Theorem \citep{cybenko1989approximation, hornik1989multilayer, pinkus1999approximation}

\begin{theorem}
Let $\rho: \R \to \R$ be any continuous function, and let $\mathcal N_n^\rho$ denote the class of feedforward neural networks with activation $\rho$, $n$ neurons in the input layer, one neuron in the output layer, and one hidden layer with an arbitrary number of neurons. Let $K \subset \R^n$ be compact. Then, $\mathcal N_n^\rho$ is dense in $C(K, \R)$.
\end{theorem}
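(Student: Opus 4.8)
The plan is to prove the statement in its standard form, which requires $\rho$ to be non-polynomial (as written it is slightly too strong: if $\rho$ is a polynomial of degree $d$ then every element of $\mathcal N_n^\rho$ is a polynomial of degree $\le d$, hence not dense; so I assume $\rho$ continuous and non-polynomial, the hypothesis in the Leshno--Lin--Pinkus--Schocken and Pinkus treatments). The strategy: $\mathcal N_n^\rho$ is exactly the set of finite linear combinations of \emph{ridge functions} $x\mapsto\rho(w\cdot x+b)$, so it suffices to show that the closed linear span of these ridge functions in $C(K,\R)$ contains every polynomial on $\R^n$; density then follows from Stone--Weierstrass (and the $C(K,\R^m)$ case follows coordinatewise).

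First I would reduce to a one-dimensional problem. Fix $w\in\R^n$ and let $[\alpha,\beta]$ be a compact interval containing $\{w\cdot x : x\in K\}$. If the one-variable span $\operatorname{span}\{\,t\mapsto\rho(at+b) : a,b\in\R\,\}$ is dense in $C[\alpha,\beta]$, then composing with $x\mapsto w\cdot x$ shows that every function $x\mapsto (w\cdot x)^k$ lies in the closed span of the ridge functions. Since, for each fixed $k$, the functions $x\mapsto (w\cdot x)^k$ with $w$ ranging over $\R^n$ linearly span all homogeneous degree-$k$ polynomials in $x$ (a polarization/identity-of-polynomials argument: a linear functional annihilating all of them yields a polynomial identity forcing it to vanish), the closed span of the ridge functions contains all polynomials on $\R^n$. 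Stone--Weierstrass then gives density in $C(K,\R)$, so everything reduces to the one-dimensional density claim.

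For the one-dimensional claim I would argue in two stages. If $\rho\in C^\infty$: the difference quotients $\bigl(\rho((a+\epsilon)t+b)-\rho(at+b)\bigr)/\epsilon$ converge uniformly on $[\alpha,\beta]$ to $t\,\rho'(at+b)$, and iterating this at $a=0$ places each $t\mapsto t^k\rho^{(k)}(b)$ in the closed span. Because $\rho$ is not a polynomial, for every $k$ there is some $b$ with $\rho^{(k)}(b)\neq 0$ (otherwise $\rho^{(k)}\equiv 0$ and $\rho$ would be a polynomial), so every monomial $t^k$, hence every polynomial, is in the closed span, and Weierstrass finishes this case. For general continuous $\rho$ I would mollify: $\rho*\phi_\delta\in C^\infty$ is a uniform-on-compacts limit of Riemann sums $\sum_i \rho(\cdot-y_i)\phi_\delta(y_i)\,\Delta y_i$ of translates of $\rho$, hence lies in the closed span, and one checks that if $\rho*\phi_\delta$ were a polynomial for every $\delta>0$ then $\rho$ itself would be a polynomial (the degrees are bounded uniformly in $\delta$, and one passes to the limit $\delta\to 0$). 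So some mollification is a non-polynomial $C^\infty$ function, to which the first stage applies, and its closure --- hence that of the original span --- contains all polynomials.

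The step I expect to be the main obstacle is this last ``soft analysis'' part: justifying the uniform convergence of the parameter difference quotients on the compact set (routine once $\rho\in C^\infty$, via compactness and continuity of the derivatives) and, more delicately, showing that no non-polynomial continuous $\rho$ has all of its mollifications polynomial --- this is where one must control $\deg(\rho*\phi_\delta)$ uniformly in $\delta$ and take a limit. A cleaner-looking alternative for the whole theorem is the Hahn--Banach/Riesz route: if the span were not dense there would be a nonzero finite signed regular Borel measure $\mu$ on $K$ with $\int_K\rho(w\cdot x+b)\,d\mu(x)=0$ for all $w,b$, and one contradicts this by proving $\rho$ is ``discriminatory'' --- for sigmoidal $\rho$ this is Cybenko's limiting argument (scaling $w,b$ shows $\mu$ annihilates every half-space, which forces $\mu=0$) --- but for general non-polynomial $\rho$ the discriminatory property is established by essentially the same differentiation/mollification ideas, so the real work is unchanged.
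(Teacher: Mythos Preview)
The paper does not actually prove this theorem: it is stated in the Preliminaries section as a restatement of the classical Universal Approximation Theorem, with citations to \cite{cybenko1989approximation,hornik1989multilayer,pinkus1999approximation}, and is used only as background to motivate the paper's own results. There is therefore no ``paper's own proof'' to compare against.

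That said, your proposal is essentially the standard Leshno--Lin--Pinkus--Schocken argument, and your outline is correct. You are also right to flag that the statement as written is slightly too strong and needs the non-polynomial hypothesis on $\rho$; the paper's phrasing is simply informal on this point. Two minor remarks: what you invoke at the end of the reduction is really the ordinary Weierstrass approximation theorem (polynomials are dense in $C(K,\R)$ for compact $K\subset\R^n$) rather than Stone--Weierstrass proper, since the ridge span is not an algebra; and in the mollification step the cleanest way to get the uniform degree bound is to note that if $\rho*\phi$ is a polynomial for every $\phi\in C_c^\infty$, then $\rho$ annihilates (in the distributional sense) some fixed differential operator $\partial_t^{k+1}$, forcing $\rho$ to be a polynomial of degree $\le k$. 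Your identification of this as the delicate point is accurate, but it is a known and routine lemma in the cited references.
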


In the analyses due to \cite{lu2017expressive}, \cite{hanin2017approximating}, \cite{kidger2020universal}, and \cite{park2020minimum}, the density of deep narrow networks is established in $C(K, \R^m)$ with respect to the uniform norm. We show that a broad family of learning tasks belong to $C(K, \R^m)$, and are thus approximable by neural network function approximators. More generally, we study the conditions necessary for a learning task to belong to $C(K, \R)$, where $K$ is a Radon space. As $C(K, \R^m) = C(K, \R)^m$, our analysis extends to the vector-valued case in a straightforward manner. 

We work in the probability space $(\Omega, \F_\Omega, \mu)$, and assume that learning tasks take the form of a conditional expectation. We assume that $\Omega$ is Radon.    

\begin{definition}[Conditional Expectation]\label{def:condexp}
Let $Y: \Omega \to \R$ and $\mathcal G \subset \F_{\Omega}$ be a sub-$\sigma$-algebra of $\F_{\Omega}$. A conditional expectation $\ex{Y \mid \mathcal G}$ is any $\mathcal G$-measurable real-valued function that satisfies the property
\begin{equation}
    \int_G \ex{Y \mid \mathcal G}d\mu = \int_G Y d\mu
\end{equation}
for all $G \in \mathcal G$. Conditional expectations $\ex{Y \mid X} := \ex{Y \mid \sigma(X)}$ may also be defined relative to a random variable $X$ by applying the preceding definition to the sub-$\sigma$-algebra generated by the random variable. Conditional expectations exist, and are $\mu$-almost everywhere uniquely determined. 
\end{definition}

Formally, we consider learning tasks $\ex{Y \mid X}: K \to \R$ where $Y: \Omega \to \R$ is a random variable as in Definition \ref{def:condexp}, and $X: \Omega \to K$ is a random variable where $K$ is a separable, complete metric space. The random variable $X$ is measurable with respect to the Borel $\sigma$-algebra $\Bor(K)$. 

\begin{definition}[Regular Conditional Probability]\label{def:condprob}
Let $X: \Omega \to K$ be a random variable over the probability space $(\Omega, \F_\Omega, \mu)$. Regular conditional probabilities are a family of probability measures $\sett{\mu_x}_{x \in K}$ over the $\sigma$-algebra $\F_\Omega$ such that for any $S \in \F_\Omega$ and $A \in \Bor(K)$,
\begin{equation}
    \mu(S \cap X^{-1}(A)) = \int_{A}\mu_x(S)d\bra{\mu \circ X^{-1}}(x)
\end{equation}
Further, for any $S \in \F_\Omega$, $x \mapsto \mu_x(S)$ is a $\Bor(K)$-measurable function. The disintegration theorem states that if $\Omega$ and $K$ are Radon spaces, then regular conditional probabilities exist and are $(\mu \circ X^{-1})$-almost everywhere uniquely determined. 
\end{definition}

In our main result, we show the continuity of learning tasks $\ex{Y \mid X}$ in terms of a latent-space model. We sometimes refer to $X$ as the \textit{input}, and $Y$ as the \textit{output}. We assume the existence of a \textit{data-generating process} (formally, a random variable) $L: \Omega \to \Theta$ to a measure space $(\Theta, \F_{\Theta})$, and say that a learning task $\ex{Y \mid X}: K \to \R$ is \textit{derived} from a data-generating process $L$ if we may write $Y = f \circ L$ for some $f: \Theta \to \R$ such that $\sup \sett{\norm{f}_{L^{\infty}(\mu_x \circ L^{-1})} \mid x \in K} < \infty$. In other words, a learning task is derived from a data-generating process $L$ if the target $Y$ we aim to learn is given by a transformation of $L$. This transformation can be pathological (e.g. highly discontinuous), so long as it has an essential supremum with respect to the pushforward of any regular conditional probability given by the input $X$. 

We identify realistic constraints on data-generating processes and show that these constraints imply that \textit{any} learning task derived from the process lies in $C(K,\R)$. As the family of learning tasks derived from a given data-generating process can be immensely large, our result, in conjunction with existing Uniform Approximation Theorems, demonstrates the approximability of a vast collection of learning tasks by neural network function approximators. 

\section{Related Work}\label{sec:related}

The continuity of conditional expectation operator has been well-studied as a function of its two arguments (the random variable and sub-$\sigma$-algebra). The Martingale convergence theorems \citep{billingsley1965ergodic, doob1953stochastic, loeve1963probability} place conditions on sequences of random variables $Y_n$ and/or sub-$\sigma$-algebras $\mathcal F_n$ such that if $Y_n \to Y$ and/or $\mathcal F_n \to \mathcal F$, then the corresponding conditional expectation functions $\ex{Y_n \mid \F_n}$ converge.
\begin{theorem}[Martingales]\label{thm:martingales}
If $\sett{\mathcal F_n}_{n \in \N}$ is a sequence of sub-$\sigma$-algebras that is monotone increasing (i.e. $\mathcal F_n \subseteq \mathcal F_{n+1}$ for any $n$), then 
\[\ex{Y \mid \mathcal F_n} \xrightarrow{L^p(\mu)} \ex{Y \mid \bigvee_{n=1}^\infty \mathcal F_n}\]
for every $Y \in L^p(\mu)$ and $1 \le p \le \infty$, where $\bigvee_{n=1}^\infty \mathcal F_n$ is the $\sigma$-algebra generated by $\bigcup_{n=1}^\infty \mathcal F_n$.
\end{theorem}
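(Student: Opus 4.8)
The plan is to prove this by a \emph{density argument}, reducing the statement to two standard facts: that conditional expectation is an $L^p$-contraction, and that functions measurable with respect to some finite stage $\mathcal F_m$ are $L^p$-dense in $L^p$ of the limiting $\sigma$-algebra. Write $\mathcal F_\infty := \bigvee_{n=1}^\infty \mathcal F_n$, and abbreviate $Y_n := \ex{Y \mid \mathcal F_n}$ and $Y_\infty := \ex{Y \mid \mathcal F_\infty}$. First I would reduce to the case in which $Y$ is $\mathcal F_\infty$-measurable: since $\mathcal F_n \subseteq \mathcal F_\infty$, the tower property gives $\ex{Y_\infty \mid \mathcal F_n} = Y_n$, so it is equivalent to prove $Y_n \to Y_\infty$ in $L^p(\mu)$; hence we may assume outright that $Y = Y_\infty \in L^p(\mathcal F_\infty, \mu)$ and prove $Y_n \to Y$.

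Next I would record the contraction estimate $\norm{\ex{W \mid \mathcal G}}_{L^p(\mu)} \le \norm{W}_{L^p(\mu)}$, valid for every sub-$\sigma$-algebra $\mathcal G$ and every $1 \le p \le \infty$ (apply conditional Jensen to the convex map $t \mapsto \abs{t}^p$ when $p < \infty$, and argue directly when $p = \infty$). Then comes the key approximation step. Because the $\mathcal F_n$ are increasing, $\bigcup_n \mathcal F_n$ is an algebra generating $\mathcal F_\infty$, and since $\mu$ is a probability measure, the simple functions whose level sets all lie in $\bigcup_n \mathcal F_n$ are dense in $L^p(\mathcal F_\infty, \mu)$ for $1 \le p < \infty$. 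Fixing $\epsilon > 0$, choose such a simple function $Z$ with $\norm{Y - Z}_{L^p(\mu)} < \epsilon$; since $Z$ has only finitely many level sets, it is $\mathcal F_m$-measurable for some $m \in \N$. Then for every $n \ge m$ we have $\ex{Z \mid \mathcal F_n} = Z$, whence
\[
\norm{Y_n - Y}_{L^p(\mu)} \le \norm{\ex{Y - Z \mid \mathcal F_n}}_{L^p(\mu)} + \norm{Z - Y}_{L^p(\mu)} \le 2\norm{Y - Z}_{L^p(\mu)} < 2\epsilon ,
\]
which proves the claim for $1 \le p < \infty$.

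The genuinely delicate point --- and the step I expect to be the main obstacle --- is the endpoint $p = \infty$. The contraction estimate survives there, but the density step does not (simple functions are not $L^\infty$-dense), and in fact $L^\infty$-convergence of $Y_n$ can genuinely fail: for the dyadic filtration on $[0,1]$ with Lebesgue measure and $Y$ the indicator of a subinterval with an irrational endpoint, $\norm{Y_n - Y}_{L^\infty(\mu)}$ does not tend to $0$. So at $p = \infty$ one must either restrict the conclusion to $1 \le p < \infty$ or impose an additional hypothesis on $Y$; if the $L^\infty$ statement is actually needed downstream, I would instead route through the almost-sure martingale convergence theorem (proved via the upcrossing inequality) together with a uniform bound on the $Y_n$. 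The remaining ingredients are routine: the $L^p$-density of algebra-built simple functions (Carath\'eodory approximation / the monotone class theorem for finite measures) and the conditional Jensen inequality.
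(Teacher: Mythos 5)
The paper offers no proof of this statement: Theorem~\ref{thm:martingales} is quoted in Section~\ref{sec:related} as a classical background result from the martingale convergence literature, so there is no in-paper argument to compare yours against. Your proposal is the standard proof and is correct for $1 \le p < \infty$: the tower-property reduction to $\mathcal F_\infty$-measurable $Y$, the conditional Jensen contraction $\norm{\ex{W \mid \mathcal G}}_{L^p(\mu)} \le \norm{W}_{L^p(\mu)}$, and the $L^p$-density of simple functions with level sets in the generating algebra $\bigcup_n \mathcal F_n$ are exactly the right three ingredients, and the final two-$\epsilon$ estimate is assembled correctly. Your caveat about the endpoint is also well taken and is in fact a (minor) defect of the statement as printed rather than of your argument: the theorem as written asserts convergence in $L^p(\mu)$ for all $1 \le p \le \infty$, but $L^\infty$-convergence genuinely fails in general --- your example of the dyadic filtration on $[0,1]$ with $Y$ the indicator of an interval with irrational endpoint is a correct counterexample --- so the range should be $1 \le p < \infty$ (for bounded $Y$ one still gets almost-sure convergence and convergence in every finite $L^p$, just not in the essential-supremum norm). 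Nothing in the rest of the paper relies on the $p=\infty$ case, so this correction is harmless downstream.
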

Later work by \cite{boylan1971equiconvergence}, \cite{fetter1977continuity}, and \cite{alonso1998lp} prove related results. However, these results do not resolve the continuity of the conditional expectation as a real-valued function as we aim to in this paper (i.e. if $\ex{Y \mid X} \in C(K, \R)$). Rather, as stated in Theorem \ref{thm:martingales}, these results establish the convergence of a sequence of conditional expectation functions in $L^p(\mu)$.  

In \cite{dolera2020lipschitz, dolera2020uniform}, conditions are given for the uniform continuity of regular conditional probabilities $\mu_x$ relative to a modulus of continuity. In theory, these conditions can be used to establish the continuity of $\ex{Y \mid X}$ given some regularity constraints on $Y$. Although the conditions given in \cite{dolera2020lipschitz, dolera2020uniform} are amenable for analysis in the context of the well-posedness of Bayesian inference \citep{stuart2010inverse, dashti2013bayesian, cotter2009bayesian, iglesias2014well} and Bayesian consistency \citep{diaconis1986consistency, ghosal2017fundamentals}, they are difficult to interpret in our setting. Our condition takes a very different form from that given in \cite{dolera2020lipschitz, dolera2020uniform}: whereas we propose a factorization constraint on a data-generating process $L$, they propose an integrability constraint directly on the regular conditional probabilities. 

\section{A Factorization for Data-Generating Processes}\label{sec:factor}

In our model, learning targets are given by (nearly arbitrary) transformations $f: \Theta \to \R$ applied to data-generating processes $L: \Omega \to \Theta$. In this section, we introduce a useful factorization for the data-generating process $L$, where we think of the process as an operation on the input $X$ in addition to some extra noise $R$. The noise $R: \Omega \to \Gamma$ is a random variable to some measure space $(\Gamma, \F_\Gamma)$. Formally, we assert that there exists a measure space $(\Gamma, \F_\Gamma)$ and measurable maps $W: \Omega \to K \times \Gamma$ and $T: K \times \Gamma \to \Theta$ such that the diagram in Figure \ref{fig:diag} commutes.

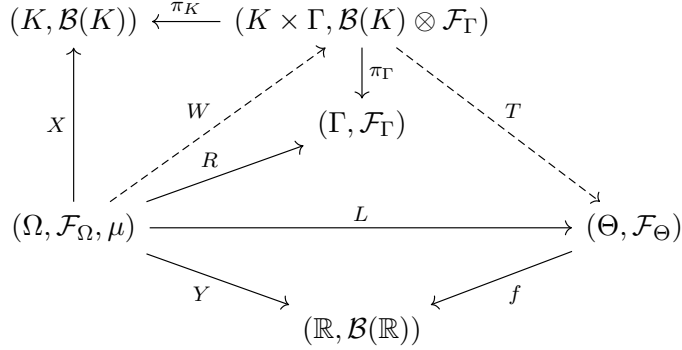
\begin{figure}[h!]
\centering
\begin{tikzcd}
{(K, \mathcal B(K))}                                                                               & {(K \times \Gamma, \mathcal{B}(K) \otimes \mathcal F_{\Gamma})} \arrow[rdd, "T", dashed] \arrow[l, "\pi_K"'] \arrow[d, "\pi_\Gamma"] &                                                                   \\
                                                                                                                     & {(\Gamma, \mathcal F_\Gamma)}                                                                                                                                &                                                                   \\
{(\Omega, \F_\Omega, \mu)} \arrow[uu, "X"] \arrow[rr, "L"] \arrow[rd, "Y"'] \arrow[ruu, "W", dashed] \arrow[ru, "R"] &                                                                                                                                                                                & {(\Theta, \mathcal F_{\Theta})} \arrow[ld, "f"] \\
                                                                                                                     & {(\mathbb R, \mathcal B(\mathbb R))}                                                                                                                         &                                                                  
\end{tikzcd}

\caption{\label{fig:diag}The $T \circ W$ factorization for data-generating processes}
\end{figure}

The map $W$ transforms the input to the data-generating process (i.e. the original outcome $\omega$) to $(X(\omega), R(\omega))$, the values of the the random variables $X$ and $R$. The map $T$ then takes this data to the latent space $\Theta$. It is always possible to trivially factorize $L$ in this way by letting the ``additional'' noise be all of the randomness we initially started with (i.e. by letting $(\Gamma, \F_\Gamma) = (\Omega, \F_\Omega)$, $R: \omega \mapsto \omega$, and $T: (x, \omega) \mapsto L(\omega)$). 

We say that a factorization $L = T \circ W$ is \textit{decomposable} if the additional noise $R = \pi_\Gamma \circ W$ is independent from $X$. Formally, in a decomposable factorization, we assert that the pushforward measure $\mu \circ W^{-1}$ over $(K \times \Gamma, \mathcal{B}(K) \otimes \mathcal F_{\Gamma})$ can be decomposed as a product measure
\begin{equation}
    \mu \circ W^{-1} = \mu \circ X^{-1} \times \mu \circ R^{-1}
\end{equation}
Although this might appear to be a somewhat restrictive condition at first glance, we show that if we start with some arbitrary initial factorization $L = T \circ W$ where the additional noise $R$ conditioned on $X$ is a continuous vector-valued random variable (as in many practical settings), then we may construct a decomposable factorization $L = T' \circ W'$.  
\begin{theorem}\label{thm:correction}
Let $L: \Omega \to \Theta$ be a data-generating process with a factorization $L = T \circ W$ as in Figure \ref{fig:diag}, and suppose that $\Gamma \subset \R^k$. Further, suppose that for every $i \in [k]$ and $x \in K$, the cumulative distribution function of $R_i$ (the $i$th component of $R = \pi_\Gamma \circ W$) conditioned on $X$ and $R_1, \dots, R_{i-1}$ is continuous. Then, there exists a decomposable factorization $L = T'\circ W'$. 
\end{theorem}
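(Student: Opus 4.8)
The plan is to apply the \emph{Rosenblatt transformation} (the probabilistic form of the Knothe--Rosenblatt rearrangement): I will reparametrize the noise $R = (R_1, \dots, R_k)$ by pushing each coordinate through its own iterated conditional cumulative distribution function, which under the continuity hypothesis turns $R$ into a uniform random vector on $[0,1]^k$ that is independent of $X$, while remaining invertible enough to reconstruct $R$ --- and hence $L$ --- from $X$ together with the new noise.

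First I would set up the conditional CDFs. Since $K$ is separable and complete and each $R_i$ is real-valued, all the spaces in sight are Radon, so the disintegration theorem (Definition \ref{def:condprob}), applied to the random variable $(X, R_1, \dots, R_{i-1}): \Omega \to K \times \R^{i-1}$ and then pushed forward under $R_i$, yields for each $i \in [k]$ a regular conditional distribution of $R_i$ given $\sigma(X, R_1, \dots, R_{i-1})$, realized as a $\Bor(K) \otimes \Bor(\R^{i-1})$-measurable family of probability measures on $\R$ indexed by $(x, r_1, \dots, r_{i-1})$. Let $F_i\pa{t \mid x, r_1, \dots, r_{i-1}}$ denote its CDF; the hypothesis says $t \mapsto F_i\pa{t \mid x, r_1, \dots, r_{i-1}}$ is continuous for every fixed conditioning value, and monotonicity plus right-continuity in $t$ upgrades measurability in the parameters to joint measurability in $\pa{t, x, r_1, \dots, r_{i-1}}$. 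I would also record the joint measurability of the generalized inverses $F_i^{-1}\pa{u \mid x, r_1, \dots, r_{i-1}} := \inf\sett{t : F_i\pa{t \mid x, r_1, \dots, r_{i-1}} \ge u}$, which follows from the identity $\sett{F_i^{-1}(u \mid \cdot) \le s} = \sett{F_i(s \mid \cdot) \ge u}$.

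Next I would introduce the forward map $\Phi: K \times \R^k \to [0,1]^k$ defined recursively by $\Phi(x, r)_i = F_i\pa{r_i \mid x, r_1, \dots, r_{i-1}}$ and the backward map $Q: K \times [0,1]^k \to \R^k$ defined recursively by $Q(x, u)_i = F_i^{-1}\pa{u_i \mid x, Q(x,u)_1, \dots, Q(x,u)_{i-1}}$; both are measurable by the previous step. Two facts drive the proof. (i) $R' := \Phi(X, R)$ is uniformly distributed on $[0,1]^k$ and independent of $X$: this is where continuity is essential, via the probability integral transform (a real random variable with continuous CDF $F$ has $F(Z) \sim U[0,1]$), applied coordinate by coordinate --- conditioning on $(X, R_1, \dots, R_{i-1})$, the law of $\Phi(X,R)_i$ is $U[0,1]$ regardless of the conditioning value --- and stitched together by the tower property to peel off one conditioning variable at a time. (ii) $Q\pa{x, \Phi(x, r)} = r$ for $\pa{\mu \circ W^{-1}}$-almost every $(x, r)$: coordinatewise this reduces to $F^{-1}(F(Z)) = Z$ almost surely, whose exceptional set is the union of the (at most countably many) maximal flat stretches of $F$, each carrying zero mass.

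Finally I would assemble the decomposable factorization: take $\pa{\Gamma', \F_{\Gamma'}} = \pa{[0,1]^k, \Bor([0,1]^k)}$, set $W' := (X, R'): \Omega \to K \times \Gamma'$ and $T' := T \circ \pa{\mathrm{id}_K \times Q}: K \times \Gamma' \to \Theta$. By fact (ii), $T'\pa{W'(\omega)} = T\pa{X(\omega), Q\pa{X(\omega), \Phi(X(\omega), R(\omega))}} = T\pa{X(\omega), R(\omega)} = L(\omega)$ for $\mu$-almost every $\omega$, so $L = T' \circ W'$ holds $\mu$-almost surely; this suffices, since the derived learning tasks $\ex{f \circ L \mid X}$ and the pushforwards $\mu_x \circ L^{-1}$ depend on $L$ only up to $\mu$-null sets (and one may redefine $Q$ on a null set to make the identity hold everywhere if exact commutativity is wanted). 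By fact (i), $\mu \circ (W')^{-1} = \pa{\mu \circ X^{-1}} \times \pa{\mu \circ (R')^{-1}}$, so the factorization $L = T' \circ W'$ is decomposable. I expect the main obstacle to be the measurability bookkeeping --- ensuring the conditional CDF family $F_i$ and its generalized inverse are measurable jointly in the point and in all conditioning coordinates, so that $\Phi$, $Q$, and $T'$ are genuine measurable maps --- together with the careful iterated-conditioning computation behind fact (i), which must yield a noise vector that is \emph{jointly} uniform and independent of $X$, not merely independent of $X$ coordinate by coordinate.
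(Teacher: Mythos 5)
Your proposal is correct and follows essentially the same route as the paper's proof: the iterated conditional-CDF (Rosenblatt) transformation, the probability integral transform to obtain a uniform noise vector $R'$ independent of $X$, and the reassembly $W' = (X, R')$, $T' = T \circ (\mathrm{id}_K \times Q)$ matching the paper's $I_x$ and $I_x^{-1}$ exactly. If anything, you are more careful than the paper on two points it elides --- joint measurability of the conditional CDFs and their generalized inverses, and the fact that $F^{-1}(F(Z)) = Z$ only almost surely when $F$ has flat stretches --- but the underlying argument is the same.
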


\textbf{Sketch of Proof (See Appendix \ref{proof:correction} for full proof). }The core idea behind this argument is inverse CDF sampling: a uniform random variable can be transformed to any continuous real-valued random variable by using the inverse of the cumulative distribution function. Letting $R_1, \dots, R_k$ denote the components of $R$, we apply an inductive process to invertibly transform $R$ to a collection of $k$ i.i.d uniform random variables $R'$ by letting the $i$th component $R'_i$ be given by the conditional CDF of $R_i$ conditioned on the values of $R_1,\dots,R_{i-1}$ and $X$. By assumption, these CDFs are continuous and therefore invertible over the support of $R$. Thus, there exists some invertible collection of maps $\sett{I_x}_{x \in K}$ such that $R' = I_X(R)$. We then let $W': \omega \mapsto (X(\omega), I_{X(\omega)}(R(\omega)))$, and $T': (x , r') \mapsto T(x, I_x^{-1}(r'))$. Clearly, $L = T' \circ W'$, and as $R'$ is independent of $X$, it is also decomposable. $\square$

In a decomposable factorization, the map $W$ can be seen as a sort of \textit{whitening} operation: the initial outcome $\omega$ is split into the input $X$ and a component $R$ independent to the input. The map $T$ then transforms these independent components to the space $\Theta$. It follows that once we condition on the value of the input $X$, all randomness in the $L$ (and thus the output $Y$) is then encapsulated by the random variable $R$. The following Lemma makes this relationship concrete. For each $x \in K$, we let $T_x: \Gamma \to \Theta$ denote that map that takes each $r\in \Gamma$ to $T(x, r)$.

\begin{lemma}\label{lem:decompose}
Let $L = T \circ W$ be decomposable.  For any $x \in K$,
\[\mu_x \circ L^{-1} = \mu \circ R^{-1} \circ  T_x^{-1}\]
almost everywhere, where $\mu_x$ is a regular conditional probability for $\mu$ over $X$. 
\end{lemma}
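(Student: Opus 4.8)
The plan is to verify the claimed equality of measures by testing both sides against an arbitrary measurable set $B \subseteq \Theta$ and integrating over cylinders $X^{-1}(A)$, using the defining property of the regular conditional probability $\mu_x$ to reduce everything to an identity for the pushforward $\mu \circ W^{-1}$, which decomposability then lets us split via Tonelli's theorem.

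First I would fix $B \in \F_\Theta$ and $A \in \Bor(K)$ and apply Definition \ref{def:condprob} with $S = L^{-1}(B)$, obtaining
\[\mu\big(L^{-1}(B) \cap X^{-1}(A)\big) = \int_A \mu_x\big(L^{-1}(B)\big)\, d(\mu \circ X^{-1})(x).\]
Using the relations $L = T \circ W$ and $X = \pi_K \circ W$ from Figure \ref{fig:diag}, the event on the left equals $W^{-1}\big(T^{-1}(B) \cap (A \times \Gamma)\big)$, so its $\mu$-measure is $(\mu \circ W^{-1})\big(T^{-1}(B) \cap (A \times \Gamma)\big)$. Invoking decomposability to write $\mu \circ W^{-1} = (\mu \circ X^{-1}) \times (\mu \circ R^{-1})$ and applying Tonelli together with the identity $\mathbf{1}_{T^{-1}(B)}(x,r) = \mathbf{1}_{T_x^{-1}(B)}(r)$ (which is just the definition of $T_x$), the left side becomes
\[\mu\big(L^{-1}(B) \cap X^{-1}(A)\big) = \int_A (\mu \circ R^{-1})\big(T_x^{-1}(B)\big)\, d(\mu \circ X^{-1})(x).\]
Comparing the two displays shows that $x \mapsto \mu_x(L^{-1}(B))$ and $x \mapsto (\mu \circ R^{-1})(T_x^{-1}(B))$ have equal integrals over every $A \in \Bor(K)$. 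Both are $\Bor(K)$-measurable — the former by Definition \ref{def:condprob}, the latter because $T^{-1}(B) \in \Bor(K) \otimes \F_\Gamma$ makes $(x,r) \mapsto \mathbf{1}_{T^{-1}(B)}(x,r)$ jointly measurable and Tonelli guarantees its partial integral in $r$ is measurable in $x$ — so they agree $(\mu \circ X^{-1})$-almost everywhere.

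The remaining issue is the order of quantifiers: the argument so far produces, for each fixed $B$, a $(\mu \circ X^{-1})$-null set off which equality holds, whereas the lemma asserts that for almost every $x$ the measures $\mu_x \circ L^{-1}$ and $\mu \circ R^{-1} \circ T_x^{-1}$ coincide on all of $\F_\Theta$ simultaneously. To bridge this I would take a countable $\pi$-system generating $\F_\Theta$ (available once $\Theta$ is Radon, hence its $\sigma$-algebra countably generated), form the countable union $N$ of the associated null sets, and observe that for $x \notin N$ the two measures agree on the generating $\pi$-system and therefore on $\F_\Theta$ by the uniqueness half of the $\pi$--$\lambda$ theorem. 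I expect this quantifier swap to be the only genuine subtlety; everything else is a mechanical unwinding of the disintegration identity plus Fubini, though it is worth confirming en route that both candidates are honest probability measures (they are, being pushforwards of $\mu$) so that $\pi$--$\lambda$ uniqueness indeed applies.
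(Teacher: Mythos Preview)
Your proposal is correct and, if anything, more careful than the paper's own argument. The paper proceeds in two stages: it first argues, for a fixed $x$, that $\mu_x \circ L^{-1} = \mu_x \circ R^{-1} \circ T_x^{-1}$ by pushing $\mu_x$ forward through $W$ and using (implicitly) that $\mu_x \circ X^{-1}$ is the Dirac mass at $x$; it then separately shows $\mu_x \circ R^{-1} = \mu \circ R^{-1}$ for a.e.\ $x$ by integrating against $A \in \Bor(K)$ and invoking a.e.\ uniqueness of the disintegration. You collapse these into a single integration-against-cylinders computation, going directly from the disintegration identity and Tonelli to the conclusion without ever handling $\mu_x \circ W^{-1}$ on its own; this sidesteps the paper's somewhat informal step of writing $\mu_x \circ W^{-1}$ as a product measure. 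You also make explicit the quantifier swap via a countable generating $\pi$-system, which the paper leaves unaddressed; that really is the only genuine subtlety here, and your treatment is the standard fix (modulo the mild hypothesis that $\F_\Theta$ is countably generated, which neither proof can do without if one wants equality of measures rather than merely set-by-set equality a.e.).
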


See Appendix \ref{proof:decompose} for the proof of this Lemma. This result is the reason why decomposability is a desirable property: it allows us to understand the conditional probabilities $\mu_x \circ L^{-1}$ through the lens of the map $T$. The maps $W$ and $T$ give two disjoint pieces of information about $L$: $W$ can be thought of as providing the underlying randomness in $L$, whereas $T$ can be thought of as a deterministic map that takes the random data and maps it $\Theta$. Using this perpsective, Lemma \ref{lem:decompose} has a nice interpretation: in a decomposable factorization $T \circ W$, conditioning $L$ \textit{probabilistically} on the variable $X$ is the same as conditioning $T$ \textit{deterministically} on $X$. Thus, deterministic properties about $T$ that might be known before-hand can be used to analyze the probabilistic properties of $L$. In the next section, we make this intuition concrete: we give constraints on $T$ that may be verified under minimal assumptions on $W$, and show that these constraints imply the continuity of derived learning tasks. 

\section{Discrete and Continuous Regularity}\label{sec:regularity}

Decomposable factorizations $L = T \circ W$ satisfy a constraint on $W$. We now give a second constraint on $T$ and show that this constraint (in conjuction with decomposability) implies the continuity of any derived learning task $\ex{f \circ L \mid X}$. We provide the full proofs of these claims in this section as they constitute the central contribution of this paper. Our constraint comes in two flavors corresponding to whether $L$ is a discrete or continuous random variable. We first focus on the discrete case, where we think of the latent space $\Theta$ as discrete or otherwise non-metrizable. 

\begin{definition}[Discrete-Regular Factorization]\label{def:disc}
A factorization $L = T \circ W$ is \textit{discrete-regular} if for any $x_0 \in K$, 
\[\lim_{x \to x_0} \bra{\mu \circ R^{-1}} \pa{\sett{r \in \Gamma \mid T_x(r) \ne T_{x_0}(r)}} = 0\]
\end{definition}

Intuitively, this condition asserts that when we fix the ``extra'' randomness $r$, the probability that the output of the data-generating process $L$ changes when the input $x$ is perturbed slightly goes to zero. As we show in Section \ref{sec:apps}, this condition can be verified using only knowledge of the map $T$, while placing minimal constraints on the random variable $R$. We now show that for any discrete-regular, decomposable factorization $L = T \circ W$ and any $x, x_0 \in K$, the conditional probability measure $\mu_x \circ L \to \mu_{x_0} \circ L$ converges in the strong topology as $x \to x_0$.

\begin{proposition}\label{prop:disc}
Let $L = T\circ W$ be a discrete-regular, decomposable factorization. Then, for any $x_0 \in K$, the total variation goes to zero in the limit
\[\lim_{x \to x_0} \sup \sett{\abs{\bra{\mu_x \circ L^{-1}}(S) - \bra{\mu_{x_0} \circ L^{-1}}(S)} \mid S \in \F_\Theta } = 0\]
\end{proposition}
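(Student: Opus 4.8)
The plan is to push everything through Lemma~\ref{lem:decompose} so that both conditional pushforwards are expressed in terms of the single fixed noise measure $\nu := \mu \circ R^{-1}$, and then to bound the total variation by a symmetric-difference estimate that is uniform in the test set $S$.

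First, fix $x_0 \in K$. By Lemma~\ref{lem:decompose}, I would work with the version of the regular conditional probability given by $\mu_x := \nu \circ T_x^{-1}$ (a legitimate version, since it agrees $(\mu\circ X^{-1})$-a.e. with any other and the defining integral identity of Definition~\ref{def:condprob} is insensitive to null sets). Then, for every $S \in \F_\Theta$,
\[\abs{\bra{\mu_x \circ L^{-1}}(S) - \bra{\mu_{x_0} \circ L^{-1}}(S)} = \abs{\nu\pa{T_x^{-1}(S)} - \nu\pa{T_{x_0}^{-1}(S)}}.\]

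Second, I would invoke the elementary fact that $\abs{\nu(A) - \nu(B)} \le \nu(A \triangle B)$ for any probability measure $\nu$ and measurable $A, B$. Taking $A = T_x^{-1}(S)$ and $B = T_{x_0}^{-1}(S)$, the key observation is the containment
\[T_x^{-1}(S)\ \triangle\ T_{x_0}^{-1}(S)\ \subseteq\ \sett{r \in \Gamma \mid T_x(r) \ne T_{x_0}(r)},\]
valid because $T_x(r) = T_{x_0}(r)$ forces $r \in T_x^{-1}(S) \iff r \in T_{x_0}^{-1}(S)$. Since the right-hand set does not depend on $S$, the bound persists after taking the supremum over $S$, giving
\[\sup\sett{\abs{\bra{\mu_x \circ L^{-1}}(S) - \bra{\mu_{x_0} \circ L^{-1}}(S)} \mid S \in \F_\Theta}\ \le\ \bra{\mu \circ R^{-1}}\pa{\sett{r \in \Gamma \mid T_x(r) \ne T_{x_0}(r)}}.\]
The right-hand side tends to $0$ as $x \to x_0$ by discrete-regularity (Definition~\ref{def:disc}), and the left-hand side is nonnegative, so a squeeze finishes the proof.

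The core estimate is short, so the points demanding attention are purely measure-theoretic: one needs $\sett{r \mid T_x(r) \ne T_{x_0}(r)} \in \F_\Gamma$ (implicit already in Definition~\ref{def:disc}, following from joint measurability of $T$) and $T_x^{-1}(S) \in \F_\Gamma$ (immediate from measurability of $T$), and one must be deliberate about which version of the regular conditional probability is used, since Lemma~\ref{lem:decompose} only gives the identity $(\mu\circ X^{-1})$-a.e.; committing to the canonical version $\mu_x := \nu \circ T_x^{-1}$ up front makes the displayed chain exact for every $x$ and $x_0$ rather than merely almost everywhere, which is what is needed to take the limit over the continuum of $x$ near $x_0$. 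I do not anticipate a genuine obstacle here — essentially all the substance was already packed into Lemma~\ref{lem:decompose} and the definition of discrete-regularity, and the remaining work is the uniform-in-$S$ symmetric-difference bound above.
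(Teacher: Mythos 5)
Your proposal is correct and follows essentially the same route as the paper's own proof: apply Lemma~\ref{lem:decompose} to rewrite both conditional pushforwards via $\mu \circ R^{-1}$, bound the difference by the measure of the symmetric difference $T_x^{-1}(S)\,\Delta\,T_{x_0}^{-1}(S)$, note that this set is contained in $\sett{r \mid T_x(r) \ne T_{x_0}(r)}$ uniformly in $S$, and invoke discrete-regularity. Your added remarks on choosing the version $\mu_x = \nu \circ T_x^{-1}$ and on measurability are sensible refinements the paper leaves implicit, but the argument is the same.
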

\begin{proof}
We show that the absolute differences $\abs{\bra{\mu_x \circ L^{-1}}(S) - \bra{\mu_{x_0} \circ L^{-1}}(S)}$ converge uniformly over subsets $S \in \F_\Theta$. Formally, we aim to show that for all $x_0 \in K$ and $\epsilon > 0$, there exists a neighborhood $N \subset K$ such that for all $x \in N$, and subsets $S \in \F_\Theta$, \[\abs{\bra{\mu_x \circ L^{-1}}(S) - \bra{\mu_{x_0} \circ L^{-1}}(S)} < \epsilon\]

As $L = T \circ W$ is a discrete-regular factorization, we have that there exists a neighborhood $N \subset K$ such that
\[\bra{\mu \circ R^{-1}} \pa{\sett{r \in \Gamma \mid T_x(r) \ne T_{x_0}(r)}} < \epsilon\]
for all $x \in N$. Since the factorization $L = T \circ W$ is decomposable, we have by Lemma \ref{lem:decompose} that for any such measurable subset $S$,
\begin{align*}
\abs{\bra{\mu_x \circ L^{-1}}(S) - \bra{\mu_{x_0} \circ L^{-1}}(S)} &= \abs{\bra{\mu \circ R^{-1} \circ T_x^{-1}}(S) - \bra{\mu \circ R^{-1} \circ T_{x_0}^{-1}}(S)}\\
&\le \bra{\mu \circ R^{-1}}\pa{T_x^{-1}(S) \Delta T_{x_0}^{-1}(S)}
\end{align*}
where $\Delta$ denotes the symmetric difference. Observe that if some $r \in \Gamma$ lies in the set $T_x^{-1}(S) \Delta T_{x_0}^{-1}(S)$, then we must have that $T_x(r) \ne T_{x_0}(r)$. We can therefore say that 
\begin{align*}
    \bra{\mu \circ R^{-1}}\pa{T_x^{-1}(S) \Delta T_{x_0}^{-1}(S)} \le  \bra{\mu \circ R^{-1}} \pa{\sett{r \in \Gamma \mid T_x(r) \ne T_{x_0}(r)}} < \epsilon
\end{align*}
by construction, thus proving the result.
\end{proof}

Since no metric is placed on $\Theta$, the discrete-regularity condition places a ``hard'' constraint on the data-generating process $L$: as $x$ approaches $x_0$, the corresponding outcomes $T_x$ and $T_{x_0}$ must be \textit{equal} with increasingly large probability when the extra noise is fixed. In the continuous case, where we assert that $\Theta \subset \R^d$, we can instead place a ``soft'' constraint on $L$ by requiring that as $x$ approaches $x_0$, the corresponding outcomes $T_{x_0}$ and $T_x$ become arbitrarily \textit{close} with increasingly large probability. 

\begin{definition}[Continuous-Regular Factorization]
A factorization $L = T \circ W$ is \textit{continuous-regular} if for all $x \in K$, $\mu_x \circ L^{-1} \ll \lambda^d$ is absolutely continuous with respect to the $d$-dimensional Lebesgue measure $\lambda^d$ with bounded Radon-Nikodym derivative $\frac{d\bra{\mu_x \circ L^{-1}}}{d\lambda^d} \le D$, and for any $\tau > 0$
\[\lim_{x \to x_0} \bra{\mu \circ R^{-1}} \pa{\sett{r \in \Gamma \mid \norm{T_x(r) - T_{x_0}(r)} \ge \tau}} = 0\]
\end{definition} 

We now similarly show that for any continuous-regular, decomposable factorization $L = T \circ W$ and any $x, x_0 \in K$, the conditional probability measure $\mu_x \circ L \to \mu_{x_0} \circ L$ converges in the strong topology as $x \to x_0$.

\begin{proposition}\label{prop:cont}
Let $L = T\circ W$ be a continuous-regular, decomposable factorization. Then, for any $x_0 \in K$, the total variation goes to zero in the limit:
\[\lim_{x \to x_0} \sup \sett{\abs{\bra{\mu_x \circ L^{-1}}(S) - \bra{\mu_{x_0} \circ L^{-1}}(S)} \mid S \in \F_\Theta } = 0\]
\end{proposition}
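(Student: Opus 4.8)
The plan is to run the argument of Proposition~\ref{prop:disc} with its ``exact-equality'' bound replaced by an approximate one. Fix $x_0 \in K$ and $\epsilon > 0$. By Lemma~\ref{lem:decompose}, $\mu_x \circ L^{-1} = \bra{\mu\circ R^{-1}}\circ T_x^{-1}$, so for every $S \in \F_\Theta$
\[
\abs{\bra{\mu_x\circ L^{-1}}(S) - \bra{\mu_{x_0}\circ L^{-1}}(S)} \;\le\; \bra{\mu\circ R^{-1}}\pa{T_x^{-1}(S)\,\Delta\,T_{x_0}^{-1}(S)}.
\]
In the discrete case one used $T_x^{-1}(S)\,\Delta\,T_{x_0}^{-1}(S) \subseteq \sett{r : T_x(r)\ne T_{x_0}(r)}$; here that set may be all of $\Gamma$, so instead I would fix a scale $\tau > 0$, set $B_\tau^x := \sett{r : \norm{T_x(r) - T_{x_0}(r)}\ge\tau}$, and observe that if $r \notin B_\tau^x$ lies in $T_x^{-1}(S)\,\Delta\,T_{x_0}^{-1}(S)$, then exactly one of $T_x(r),T_{x_0}(r)$ is in $S$ while the two points are within distance $\tau$, so the segment joining them meets $\partial S$ and $T_{x_0}(r)$ lies in the open collar $N_\tau(\partial S) := \sett{y : \operatorname{dist}(y,\partial S)<\tau}$. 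Thus $T_x^{-1}(S)\,\Delta\,T_{x_0}^{-1}(S) \subseteq B_\tau^x \cup T_{x_0}^{-1}\pa{N_\tau(\partial S)}$, giving
\[
\abs{\bra{\mu_x\circ L^{-1}}(S) - \bra{\mu_{x_0}\circ L^{-1}}(S)} \;\le\; \bra{\mu\circ R^{-1}}(B_\tau^x) + \bra{\mu_{x_0}\circ L^{-1}}\pa{N_\tau(\partial S)}.
\]

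The first term is handled just as in Proposition~\ref{prop:disc}: for the fixed $\tau$, continuous-regularity gives a neighborhood $N \subset K$ of $x_0$ on which $\bra{\mu\circ R^{-1}}(B_\tau^x) < \epsilon/2$ for all $x \in N$. For the second term I would use the remaining part of continuous-regularity, the uniform density bound, to write $\bra{\mu_{x_0}\circ L^{-1}}\pa{N_\tau(\partial S)} \le D\,\lambda^d\pa{N_\tau(\partial S)}$, and then choose $\tau$ small enough that this is $< \epsilon/2$. Combining the two estimates, the total variation is $< \epsilon$ throughout $N$, which is the assertion.

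The step I expect to be the real obstacle is the last one: forcing $\lambda^d\pa{N_\tau(\partial S)} \to 0$ as $\tau \to 0$ \emph{uniformly over all $S \in \F_\Theta$}. For a single $S$ with $\lambda^d(\partial S)=0$ this is immediate from continuity from above of $\lambda^d$, but for a general Borel $S$ (whose boundary can have full measure in a bounded region) the naive collar estimate is far too lossy, so something more global is needed. The route I would actually pursue is to note that, by continuous-regularity, \emph{both} conditional laws have $L^1(\lambda^d)$ densities $p_x, p_{x_0} \le D$, so the total variation equals $\tfrac12\norm{p_x - p_{x_0}}_{L^1(\lambda^d)}$, and then to show that the ``soft'' closeness of $T_x$ to $T_{x_0}$ against the reference noise $\mu\circ R^{-1}$, together with the uniform $L^\infty$ ceiling on the densities, forces $p_x \to p_{x_0}$ in $L^1$ --- for instance through a mollification or Scheff\'e-type argument applied to the densities. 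Establishing that implication is the crux of the proof, and it is the part I would develop in full detail.
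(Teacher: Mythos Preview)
Your collar estimate --- bound $T_x^{-1}(S)\,\Delta\,T_{x_0}^{-1}(S)$ by $B_\tau^x \cup T_{x_0}^{-1}\!\pa{N_\tau(\partial S)}$ and then invoke the density cap $D$ on the second piece --- is the paper's argument for Jordan-measurable $S$, only organised slightly differently (the paper splits the collar contribution between $\mu_x$ and $\mu_{x_0}$ via the sets $J_x,J_{x_0}$, whereas your one-sided bound through $\mu_{x_0}$ alone is equally valid). You also name the obstruction correctly: $\lambda^d\!\pa{N_\tau(\partial S)}$ need not shrink for a general Borel $S$, and certainly not uniformly over all $S$.

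Where you diverge from the paper is in how that gap is closed. The paper does \emph{not} pass to $L^1$ convergence of densities. Instead it proves a separate approximation lemma (Lemma~\ref{lem:extend}): every Borel $S$ admits a Jordan-measurable $J$ with $\lambda^d(S\Delta J)$ arbitrarily small. The uniform bound $\der{\bra{\mu_x\circ L^{-1}}}{\lambda^d}\le D$ then gives $\abs{\bra{\mu_x\circ L^{-1}}(S)-\bra{\mu_x\circ L^{-1}}(J)}\le D\,\lambda^d(S\Delta J)$ for \emph{every} $x\in K$, so the $S$-to-$J$ error is controlled uniformly in $x$, and the $J$ term is handled by the Jordan case you already have. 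This is more elementary than a Scheff\'e or mollification argument and uses the density cap in exactly the role you anticipated.

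Your proposed alternative --- show $p_x\to p_{x_0}$ in $L^1(\lambda^d)$ directly --- remains a sketch, and the tools you name do not obviously apply. Scheff\'e's lemma would need $\lambda^d$-a.e.\ pointwise convergence of the densities, which continuous-regularity (closeness of $T_x$ to $T_{x_0}$ only \emph{in probability} under $\mu\circ R^{-1}$) does not furnish; and weak convergence together with a uniform $L^\infty$ ceiling does not upgrade to $L^1$ convergence in general (consider $p_n=1+\tfrac12\sin(2\pi n\,\cdot)$ on $[0,1]$, which satisfies both hypotheses yet has $\norm{p_n-1}_{L^1}=1/\pi$). So the proposal stops exactly at the step you flag as the crux; the missing device supplied by the paper is the Jordan-approximation lemma, which turns your per-$S$ collar estimate into a statement about all Borel $S$ at once.
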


\begin{proof}
The intuition behind this argument is quite similar to that of Proposition \ref{prop:disc}, but much more care is necessary to carry out the argument. As before, we aim to show that for any $x_0 \in K$ and $\epsilon > 0$, there exists a neighborhood $N \subset K$ about $x_0$ such that for all $x \in N$ and $S \in \F_\Theta$,
 \[\abs{\bra{\mu_x \circ L^{-1}}(S) - \bra{\mu_{x_0} \circ L^{-1}}(S)} < \epsilon\]
However, rather than showing this directly for \textit{all} subsets $S \in \F_\Theta \subseteq \Bor(\R^d)$, we first establish uniform convergence for subsets $J$ that are \textit{Jordan-measurable}.

\begin{definition}[Jordan-measurability]
A measurable subset $J \in \Bor(R^d)$ is Jordan-measurable if
\[\lambda^d(\partial J) = \lambda^d\pa{\sett{\theta \in \R^d \mid d(\theta, J) = 0}} = 0\]
where $d(\theta,J) := \inf \sett{\norm{\theta - s} \mid s \in J} = 0$ denotes the distance from the point $\theta$ to the set $J$. Although boxes, balls and other ``simple'' sets are Jordan-measurable, other Borel sets such as $\Q$ or the ``fat'' Cantor set are not Jordan-measurable.
\end{definition}

We prove a key intermediate fact about Jordan-measurable subsets. We show that no ``well-behaved'' probability measure can assign large measure to a thin annulus about a Jordan-measurable set. 
\begin{lemma}\label{lem:jordan}
Let $J \in \F_\Theta$ be a Jordan-measurable subset, and let $\sett{\nu_t}_{t \in T}$ be a family of probability measures, each satisfying $\nu_t \ll \lambda^d$ and $\frac{d\nu_t}{d\lambda^d} \le D$. Then,
\[\lim_{\delta \to 0} \sup \sett{\nu_t\sett{\theta \in \R^d \setminus J \mid d(\theta, J) < \delta} \mid t \in T} = 0\]
\end{lemma}

See Appendix \ref{proof:jordan} for the proof. We now show that for any $x_0 \in K$ and $\epsilon > 0$, there exists a neighborhood $N \subset K$ about $x_0$ such that for all $x \in N$ and Jordan-measurable $J \in \F_\Theta$, $\abs{\bra{\mu_x \circ L^{-1}}(J) - \bra{\mu_{x_0} \circ L^{-1}}(J)} < \epsilon$. By continuous-regularity, the measures $\sett{\mu_x \circ L^{-1}}_{x \in K}$ are absolutely continuous with respect to the $d$-dimensional Lebesgue measure $\lambda^d$ and have bounded Radon-Nikodym derivative $\frac{d\bra{\mu_x \circ L^{-1}}}{d\lambda^d} \le D$. Thus, by Lemma \ref{lem:jordan}, there exists a $\tau > 0$ such that for all $x \in K$,
\[\bra{\mu_x \circ L^{-1}}\pa{\sett{\theta \in \R^d \setminus J \mid d(\theta, J) < \tau}} < \frac{\epsilon}{2}\]
By continuous-regularity, we also have that there exists a neighborhood $N \subset K$ such that for all $x \in N$,
\[\bra{\mu \circ R^{-1}} \pa{\sett{r \in \Gamma \mid \norm{T_x(r) - T_{x_0}(r)}\ge \tau}} < \frac{\epsilon}{2}\]
We then have, applying Lemma \ref{lem:decompose}, that
\begin{align*}
    \abs{\bra{\mu_x \circ L^{-1}}(J) - \bra{\mu_{x_0} \circ L^{-1}}(J)} &= \abs{\bra{\mu \circ R^{-1} \circ T_x^{-1}}(J) - \bra{\mu \circ R^{-1} \circ T_{x_0}^{-1}}(J)}\\
&\le \bra{\mu \circ R^{-1}}\pa{T_x^{-1}(J) \Delta T_{x_0}^{-1}(J)}\\\
&= \bra{\mu \circ R^{-1}}\pa{T_x^{-1}(J) \setminus T_{x_0}^{-1}(J)} + \bra{\mu \circ R^{-1}}\pa{T_{x_0}^{-1}(J) \setminus T_{x}^{-1}(J)}
\end{align*}
Consider the following two subsets of $\Theta$
\[J_{x_0} := T_{x_0}\pa{T_x^{-1}(J) \setminus T_{x_0}^{-1}(J)} \qquad J_{x} := T_{x}\pa{T_{x_0}^{-1}(J) \setminus T_{x}^{-1}(J)}\]
Observe that both of these subsets are disjoint from $J$. Further, $r \in T_x^{-1}(J) \setminus T_{x_0}^{-1}(J)$, then $T_{x_0}(r) \in J_{x_0}$ and $T_x(r) \in J$. Jimilarly, if $r \in T_{x_0}^{-1}(J) \setminus T_{x}^{-1}(J)$, then $T_x(r) \in J_x$ and $T_{x_0}(r) \in J$. We can then apply Lemma \ref{lem:decompose} again, to see that
\begin{align*}
    \bra{\mu \circ R^{-1}}\pa{T_x^{-1}(J) \setminus T_{x_0}^{-1}(J)} + \bra{\mu \circ R^{-1}}\pa{T_{x_0}^{-1}(J) \setminus T_{x}^{-1}(J)} =\bra{\mu_{x_0} \circ L^{-1}}\pa{J_{x_0}} + \bra{\mu_x \circ L^{-1}}\pa{J_x}
\end{align*}
which we can then split up as the sum of the terms
\begin{align*}
&\bra{\mu_{x_0} \circ L^{-1}}\pa{\sett{\theta \in  J_{x_0} \mid d(\theta, J) < \tau}} + \bra{\mu_{x} \circ L^{-1}}\pa{\sett{\theta \in  J_x \mid d(\theta, J) < \tau}}\\
&\le \sup \sett{\bra{\mu_x \circ L^{-1}}\pa{\sett{\theta \in \R^d \setminus J \mid d(\theta, J) < \tau}} \mid x \in K}\\
&< \frac{\epsilon}{2}
\end{align*}
and the terms
\begin{align*} 
&\bra{\mu_{x_0} \circ L^{-1}}\pa{\sett{\theta \in  J_{x_0} \mid d(\theta, J) \ge \tau}} + \bra{\mu_{x} \circ L^{-1}}\pa{\sett{\theta \in  J_{x_0} \mid d(\theta, J) \ge \tau}}\\
&= \bra{\mu \circ R^{-1}}\pa{\sett{r \in T_x^{-1}(J) \setminus T_{x_0}^{-1}(J) \mid d(T_{x_0}(r),J) \ge \tau}} \\
&\qquad \qquad + \bra{\mu \circ R^{-1}}\pa{\sett{r \in T_{x_0}^{-1}(J) \setminus T_{x}^{-1}(J) \mid d(T_{x}(r),J) \ge \tau}}\\
&\le \bra{\mu \circ R^{-1}} \pa{\sett{r \in \Gamma \mid \norm{T_x(r) - T_{x_0}(r)}\ge \tau}}\\
&< \frac{\epsilon}{2}
\end{align*}
It thus follows that for all $x \in N$, $\abs{\bra{\mu_x \circ L^{-1}}(J) - \bra{\mu_{x_0} \circ L^{-1}}(J)} < \epsilon$, whence we have that for any $x_0$,
\begin{equation} \label{eq:partial}
\lim_{x \to x_0} \sup \sett{\abs{\bra{\mu_x \circ L^{-1}}(J) - \bra{\mu_{x_0} \circ L^{-1}}(J)} \mid J \in \F_\Theta, \lambda^d(\partial J) = 0} = 0
\end{equation}
To extend this result to any Borel set $S$, we make use of the following Lemma.
\begin{lemma}\label{lem:extend}
For any Borel set $S \in \Bor(\R^d)$,
\[\inf \sett{\lambda^d\pa{S \Delta J} \mid J \in \Bor(\R^d), \lambda^d(\partial J) = 0} = 0\]
\end{lemma}

See Appendix \ref{proof:extend} for the proof. We now show that for any $\epsilon > 0$ and $x_0 \in K$, there exists a neighborhood $N \subset K$ such that for all $x \in N$ and Borel subsets $S \in \F_\Theta$, $\abs{\bra{\mu_x \circ L^{-1}}(S) - \bra{\mu_{x_0} \circ L^{-1}}(S)} < \epsilon$. Using Equation \ref{eq:partial}, we select a neighborhood $N$ such that for all $x \in N$,
\[\sup \sett{\abs{\bra{\mu_x \circ L^{-1}}(J) - \bra{\mu_{x_0} \circ L^{-1}}(J)} \mid J \in \F_\Theta, \lambda^d(\partial J) = 0} < \frac{\epsilon}{2}\]
Next, we apply Lemma \ref{lem:extend} to find a Jordan-measurable subset $J$ such that $\lambda^d(S \Delta J) < \frac{\epsilon}{2D}$. We then have, by continuous-regularity, that
\begin{align*}
    \abs{\bra{\mu_x \circ L^{-1}}(S) - \bra{\mu_{x_0} \circ L^{-1}}(S)} &\le \abs{\bra{\mu_x \circ L^{-1}}(S\Delta J) - \bra{\mu_{x_0} \circ L^{-1}}(S \Delta J)}\\
    & \qquad \qquad + \abs{\bra{\mu_x \circ L^{-1}}(J) - \bra{\mu_{x_0} \circ L^{-1}}(J)} \\
    & < D \lambda^d(S \Delta J) + \frac{\epsilon}{2}\\
    &< \epsilon
\end{align*}
thus proving the result.
\end{proof}
We now use Propositions \ref{prop:disc} and \ref{prop:cont} to show our main result: that derived learning tasks from data-generating processes with decomposable discrete-regular or continuous-regular factorizations are continuous.
\begin{theorem}\label{thm:main}
Let $L = T \circ W$ be a decomposable discrete-regular or continuous-regular factorization. Then, for any $f: \Theta \to \R$ such that $B := \sup \sett{\norm{f}_{L^{\infty}(\mu_x \circ L^{-1})} \mid x \in K} < \infty$, the conditional expectation $\ex{f \circ L \mid X} \in C(K ,\R)$.
\end{theorem}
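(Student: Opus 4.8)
The plan is to exhibit an explicit continuous representative of the learning task and then transfer the total‑variation convergence supplied by Propositions~\ref{prop:disc} and~\ref{prop:cont} through the integral against $f$. Concretely, I would define $g:K\to\R$ by $g(x):=\int_\Gamma (f\circ T_x)\,d\bra{\mu\circ R^{-1}}$ and prove two things: (i) $g$ is a version of $\ex{f\circ L\mid X}$, and (ii) $g$ is continuous at every point of $K$. The theorem then follows from the paper's convention of identifying the learning task with this function on $K$ (conditional expectations being determined only almost everywhere, it suffices to produce one continuous version).

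For a harmless normalization, first replace $f$ by its truncation $\tilde f:=\max(-B,\min(B,f))$. Since $\abs f\le B$ holds $\bra{\mu_x\circ L^{-1}}$‑a.e.\ for every $x\in K$ by the definition of $B$, and since by Lemma~\ref{lem:decompose} $\mu_x\circ L^{-1}=\mu\circ R^{-1}\circ T_x^{-1}$ for $(\mu\circ X^{-1})$‑a.e.\ $x$, this substitution changes neither the integrals $\int f\circ L\,d\mu_x$ nor the values $g(x)$; so I may assume $\abs f\le B$ pointwise. For step (i), set $\nu_x:=\mu\circ R^{-1}\circ T_x^{-1}$, so that $g(x)=\int_\Theta f\,d\nu_x$ by a change of variables along $T_x$. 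Lemma~\ref{lem:decompose} gives $\nu_x=\mu_x\circ L^{-1}$ for a.e.\ $x$, hence $g$ agrees $(\mu\circ X^{-1})$‑a.e.\ with $x\mapsto\int_\Omega f\circ L\,d\mu_x$; and the latter is a version of $\ex{f\circ L\mid X}$ directly from Definition~\ref{def:condprob} (extend the defining identity $\mu(S\cap X^{-1}(A))=\int_A\mu_x(S)\,d\bra{\mu\circ X^{-1}}$ from indicators to the bounded measurable function $f\circ L$ by linearity and monotone convergence, and note that $x\mapsto\int f\circ L\,d\mu_x$ is $\Bor(K)$‑measurable because each $x\mapsto\mu_x(S)$ is). Therefore $g$ is itself a version of $\ex{f\circ L\mid X}$. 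Alternatively one can argue (i) via decomposability and Tonelli, writing $\int_A g\,d\bra{\mu\circ X^{-1}}=\int_{A\times\Gamma}f(T(x,r))\,d\bra{\mu\circ W^{-1}}=\int_{X^{-1}(A)}f\circ L\,d\mu$ using $\mu\circ W^{-1}=\bra{\mu\circ X^{-1}}\times\bra{\mu\circ R^{-1}}$, $X=\pi_K\circ W$, and $L=T\circ W$.

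For step (ii), fix $x_0\in K$. Because the factorization is decomposable and discrete‑ or continuous‑regular, Proposition~\ref{prop:disc} (resp.\ Proposition~\ref{prop:cont}), applied with the representative $\mu_x\circ L^{-1}=\nu_x$ used in their proofs, gives
\[\alpha(x):=\sup\sett{\abs{\nu_x(S)-\nu_{x_0}(S)}\mid S\in\F_\Theta}\longrightarrow 0\quad\text{as }x\to x_0.\]
Since $\abs f\le B$ everywhere, the elementary bound for probability measures — decompose the signed measure $\nu_x-\nu_{x_0}$ into its positive and negative parts, each of total mass $\alpha(x)$ — yields
\[\abs{g(x)-g(x_0)}=\abs{\int_\Theta f\,d\nu_x-\int_\Theta f\,d\nu_{x_0}}\le 2B\,\alpha(x)\longrightarrow 0.\]
Hence $g$ is continuous at $x_0$; as $x_0$ was arbitrary, $g\in C(K,\R)$, and since $g$ is a version of $\ex{f\circ L\mid X}$, the learning task lies in $C(K,\R)$.

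The main obstacle is not analytic depth but careful bookkeeping of the almost‑everywhere identities: Lemma~\ref{lem:decompose}, the a.e.\ uniqueness of conditional expectations, and the a.e.\ uniqueness of regular conditional probabilities must be handled so that the \emph{pointwise‑everywhere} continuity conclusion is genuinely justified. My way around this is to work throughout with the distinguished family $\nu_x=\mu\circ R^{-1}\circ T_x^{-1}$ — the one for which Propositions~\ref{prop:disc} and~\ref{prop:cont} are actually proved — and to observe that $g$, defined through $\nu_x$, happens to agree a.e.\ with a bona fide version of the conditional expectation, which is all that is needed. The only quantitative input is the estimate $\abs{\int h\,d\nu_1-\int h\,d\nu_2}\le 2\norm{h}_{\infty}\sup_S\abs{\nu_1(S)-\nu_2(S)}$, and the role of the hypothesis $B<\infty$ is exactly to make $\norm{f}_\infty$ finite so that total‑variation convergence passes through the integral.
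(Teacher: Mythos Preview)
Your proposal is correct and follows essentially the same route as the paper's proof: invoke Propositions~\ref{prop:disc}/\ref{prop:cont} for total-variation convergence of the conditional laws and bound $\abs{\int f\,d\nu_x-\int f\,d\nu_{x_0}}$ by $\norm{f}_\infty$ times the total variation. Your explicit choice of the representative $g(x)=\int_\Gamma f\circ T_x\,d\bra{\mu\circ R^{-1}}$, the truncation of $f$ to enforce $\abs f\le B$ pointwise, and the verification that $g$ is a bona fide version of the conditional expectation are in fact more scrupulous than the paper's own argument, which simply writes $\ex{f\circ L\mid X}(x)=\int_\Omega f\circ L\,d\mu_x$ and passes over these a.e.\ issues.
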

\begin{proof}
We show that for any $\epsilon > 0$ and $x_0 \in K$, there exists a neighborhood $N \subset K$ such that for all $x \in N$. $\abs{\ex{f \circ L \mid X}(x_0) - \ex{f \circ L \mid X}(x)} < \epsilon$. By Propositions \ref{prop:disc} and \ref{prop:cont}, we may select a neighborhood $N$ such that for all $x \in N$,
\[\sup \sett{\abs{\bra{\mu_x \circ L^{-1}}(S) - \bra{\mu_{x_0} \circ L^{-1}}(S)} \mid S \in \F_\Theta} < \frac{\epsilon}{B}\]
Expanding using the regular conditional probabilities, we have that
\begin{align*}
    \abs{\ex{f \circ L \mid X}(x_0) - \ex{f \circ L \mid X}(x)} &= \abs{\int_{\Omega} \pa{f \circ L} d\mu_{x_0} - \int_{\Omega} \pa{f \circ L} d\mu_x}\\
    &= \abs{\int_{\Theta} f d\bra{\mu_{x_0} \circ L^{-1}} - \int_{\Theta} fd\bra{\mu_{x} \circ L^{-1}}}\\
    &\le \int_{\Theta} \abs{f} d\abs{\bra{\mu_{x_0} \circ L^{-1}} - \bra{\mu_{x} \circ L^{-1}}}\\
    &< B \cdot \frac{\epsilon}{B}
\end{align*}
whence the desired result follows.
\end{proof}

\section{Applications}\label{sec:apps}
In this section, we show how our constraints may be applied to demonstrate the continuity of real-world learning tasks. We first return to the example illustrated in Figures \ref{fig:l1} and \ref{fig:l2} in Section \ref{sec:intro}. Recall that in this example, $X$ was a globally supported real-valued variable and $R \sim U[0,1]$ was independent from $X$. Notice that the data-generating processes $L_1$ and $L_2$ both have decomposable factorizations. Letting
\[W: \omega \mapsto (X, R) \qquad T_1: (x, r) \mapsto x + r \qquad T_2 (x, r) \mapsto xr\]
we may write $L_1 = T_1 \circ W$ and $L_2 = T_2 \circ W$. As the maps $T_1(\cdot, R)$ and $T_2(\cdot, R)$ are both continuous with probability $1$, both factorizations satisfy the second part of the continuous-regularity constraint. The probablity densities, however, are given by
\[\frac{d(\mu_x \circ L_1^{-1})}{d\lambda}(z) = \begin{cases}1 & z \in [x, x + 1]\\0 & \text{else}\end{cases} \qquad \frac{d(\mu_x \circ L_2^{-1})}{d\lambda}(z) = \begin{cases}\abs{\frac1x} & z \in [0, x]\\0 & \text{else}\end{cases} \]
Thus, only $L_1$ has bounded density when conditioned on $X$: near the point $x = 0$, the conditional density of $L_2$ given $X = x$ can become arbitrarily large about zero and is thus not bounded. Indeed, the discontinuity in the conditional expectation that appears when the function $\operatorname{frac}$ is applied to $L_2$ is at the point $x = 0$. 

As $L_1$ has conditional density bounded by $1$, the factorization $L_1 = T_1 \circ W$ satisfies continuous-regularity. Thus, Theorem \ref{thm:main} guarantees that any essentially bounded $f$ may be applied, and the corresponding conditional expectation $\ex{f \circ L \mid X = x}$ will be continuous. As $T_1(\cdot, r): x \mapsto x + r$ is a continuous function for \textit{any} real $r$, the random variable $R$ may in fact be any independent continuous random variable, and the guarantee from Theorem \ref{thm:main} will still hold! Thus, continuity of any conditional expectation $\ex{f \circ L_1 \mid X}$ can be established given only $T_1$ and some minimal assumptions on $R$. We demonstrate the power of this reasoning in showing that real-world learning tasks are continuous, using stable matching as an example.

\textbf{Continuity of Stable Matching. }In the stable matching problem (first introduced in \cite{gale1962college}), there are sets $S_M$ and $S_W$ consisting of $n$ men and $n$ women, each with preferences over agents of the opposite gender. We aim to find a bijection between the men and women such that no man-woman pair mutually prefers to be matched over their assigned partners. We model real-world matching markets, such as the National Residency Matching Program \citep{roth1984evolution} by assigning each $i \in S_M \cup S_W$ to a \textit{feature vector} $X_i \in \R^n$ and continuous \textit{preference function} $P_i: \R^n \to \R$. We let $i$ prefer $j$ to $k$ if $P_i(X_j) > P_i(X_k)$. We assert that preferences are \textit{strict} -- for any $c \in \R$, $\lim_{\delta \to 0} \lambda^n\pa{x \in \R^n \mid P_i(x) \in [c, c + \delta]} = 0$. That is, no ``ties'' are allowed on sets with positive Lebesgue measure. Finally, we let $L_a$ (for $a \in S_M$) denote the feature vector of man $a$'s match under the standard deferred acceptance algorithm. 

We let the preference functions $P_i$ and feature vectors $X_i$ be randomly generated independently from $X_a$, and assert that each $X_i \ll \lambda^n$ is a continuous random variable. The variable $L_a$ may now be factored decomposably as $T \circ W$ where $W$ denotes the collection of random feature vectors and preference functions, and $T$ denotes the output of the deferred acceptance algorithm. We show that \textit{any} essentially bounded function of the match data $L_a$ has continuous conditional expectation with respect to the value of $X_a$, and is therefore approximable by neural network function approximators. 

\begin{theorem}\label{thm:stable}
The factorization $L_a = T \circ W$ is discrete-regular.
\end{theorem}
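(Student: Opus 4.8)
The plan is to verify the discrete-regularity condition from Definition \ref{def:disc} directly: fix $x_0 \in K = \R^n$, and show that as $x \to x_0$, the $(\mu \circ R^{-1})$-probability of the set of noise realizations $r$ for which $T_x(r) \neq T_{x_0}(r)$ tends to zero. Here $r$ encodes the feature vectors $X_i$ and preference functions $P_i$ of all agents other than $a$, and $T_x(r)$ is the feature vector of man $a$'s partner under deferred acceptance when $X_a = x$. The key structural observation is that the output of deferred acceptance depends on the profile only through the \emph{ordinal preference lists} it induces; so if perturbing $X_a$ from $x_0$ to $x$ does not change \emph{any} agent's ranking of \emph{any} pair of alternatives, then the entire preference profile (as an ordinal object) is unchanged, hence $T_x(r) = T_{x_0}(r)$. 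Therefore $\{r : T_x(r) \neq T_{x_0}(r)\}$ is contained in the event that \emph{some} agent's ordinal ranking flips when $X_a$ moves from $x_0$ to $x$.

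Next I would analyze when such a flip can occur. Only rankings that involve man $a$ as an alternative can change, since $a$'s feature vector is the only thing being perturbed; concretely, for a woman $w$, her ranking changes only if there is some other man $a'$ with $X_{a'} = $ (some value in $r$) such that $P_w(x)$ and $P_w(X_{a'})$ lie on opposite sides of each other after the perturbation, i.e. $P_w$ evaluated at the moving point $x$ crosses the level $P_w(X_{a'})$. Man $a$'s own ranking never changes, since his preference function $P_a$ is fixed and he is ranking other agents' fixed feature vectors. Also the ranking among men as seen by a woman can flip only if man $a$ is one of the two men being compared. So the bad event is contained in $\bigcup_{w \in S_W} \bigcup_{a' \neq a} \{r : P_w(x) \text{ lies between } P_w(x_0) \text{ and } P_w(X_{a'}), \text{ or within the ``crossing'' interval}\}$, together with the analogous (measure-zero-limit) event that $P_w(x)$ exactly ties some $P_w(X_{a'})$.

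To bound this, I would use the strictness assumption on preferences together with continuity of each $P_w$. Fix $\epsilon > 0$. Because $P_w$ is continuous, $P_w(x) \to P_w(x_0)$ as $x \to x_0$, so for $x$ in a small enough neighborhood $N$ of $x_0$ we have $|P_w(x) - P_w(x_0)| < \delta$ for all $w$ (finitely many $w$, so a uniform $\delta$ and $N$ work). Then a flip in $w$'s list involving $a'$ requires $P_w(X_{a'})$ to lie in the interval between $P_w(x_0)$ and $P_w(x)$, which has length $< \delta$ and is contained in $[P_w(x_0) - \delta, P_w(x_0) + \delta]$. Since $X_{a'} \ll \lambda^n$ is a continuous random variable independent of $X_a$, the strictness condition $\lim_{\delta \to 0}\lambda^n\{z : P_w(z) \in [c, c+\delta]\} = 0$, combined with absolute continuity of the law of $X_{a'}$, gives $\Pr[P_w(X_{a'}) \in [P_w(x_0)-\delta, P_w(x_0)+\delta]] \to 0$ as $\delta \to 0$ (this needs a short argument: the pushforward of the law of $X_{a'}$ under $P_w$ is a measure on $\R$ that assigns measure zero to singletons — by strictness plus absolute continuity — hence is atomless, so thin intervals around a fixed center have vanishing mass). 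Taking $\delta$ small enough that each of the finitely many such probabilities is below $\epsilon / (n \cdot |S_W|)$ and union-bounding over the at most $n|S_W|$ pairs $(w, a')$ yields $(\mu \circ R^{-1})(\{r : T_x(r) \neq T_{x_0}(r)\}) < \epsilon$ for $x \in N$, which is exactly discrete-regularity.

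The main obstacle I anticipate is the first step — making rigorous the claim that the only way $T_x(r) \neq T_{x_0}(r)$ is through an ordinal preference flip of the specific local form above, and in particular handling the boundary/tie events cleanly (the set where $P_w(x) = P_w(X_{a'})$ exactly). This requires a careful case analysis of how deferred acceptance consumes preference data and an argument that ties occur with probability tending to zero, leaning again on strictness and the atomlessness of $P_w$'s pushforward. The measure-theoretic estimates in the last step are routine once that reduction is in place.
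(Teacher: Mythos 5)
Your proposal is correct and follows essentially the same route as the paper's proof: reduce the event $T_x(r)\ne T_{x_0}(r)$ to the event that some woman's ordinal ranking of man $a$ versus another man flips, then union-bound over the finitely many woman--man pairs and kill each term using continuity of $P_w$ together with strictness and the absolute continuity of the $X_{a'}$. Your added remarks on atomlessness of the pushforward of the law of $X_{a'}$ under $P_w$ and on the tie events make explicit steps the paper leaves implicit, but the argument is the same.
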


\begin{proof}
Letting $R$ denote the collection of preference functions and feature vectors (sans $X_a$), we aim to show (as per Definition \ref{def:disc}) that for any $x_a \in \R^n$
\[\lim_{\norm{\B \delta} \to 0}\pr{T(x_a, R) \ne T(x_a + \B \delta, R)} = 0\]
Observe that the stable match must remain the same if each of the women's preference rankings do not change as a result of the perturbation $\B \delta$. Thus, it suffices to show that
\begin{align*}
    &\lim_{\norm{\B \delta} \to 0}\pr{\bigcup_{(m,w) \in S_M \setminus \sett{a} \times S_W} P_w(X_m) \in [P_w(x_a), P_w(x_a + \B \delta)]} \\
    &= \lim_{\delta \to 0}\pr{\bigcup_{(m,w) \in S_M \setminus \sett{a} \times S_W} P_w(X_m) \in [P_w(x_a), P_w(x_a) + \delta]}\\
    &\le \sum_{(m,w) \in S_M \setminus \sett{a} \times S_W} \lim_{\delta \to 0} \pr{P_w(X_m) \in [P_w(x_a), P_w(x_a) + \delta]}
\end{align*}
where in the second line, we invoke the continuity of the preference functions $P_w$. As the preference functions are strict, $\lim_{\delta \to 0} \lambda^n\pa{x \in \R^d \mid P_w(x) \in [P_w(x_a), P_w(x_a) + \delta]} = 0$, whence it follows, by the absolute continuity of each of the $X_i$ that the above sum goes to zero in the limit as desired.
\end{proof}

A subsequent application of Theorem \ref{thm:main} guarantees that for any essentially bounded $f$, the map $x \mapsto \ex{f \circ L_a \mid X_a=x}$  is continuous. Finally, by applying existing universal approximation guarantees, we have theoretical evidence that any derived learning task from stable match data can be well approximated by a neural network. We emphasize that the distribution over the infinite-dimensional space of preference functions and feature vectors was left nearly arbitrary. Theorem \ref{thm:main} allows easily-verifiable deterministic guarantees on $T$ to translate into a strong probabilistic guarantee on $L$. 

\section{Conclusion}
In this paper, we developed a factorization constraint on data-generating processes $L$, such that for a broad family of real-valued functions $f$, conditional expectations $x \mapsto \ex{f \circ L \mid X=x}$ are continuous. The factorization we describe in Section \ref{sec:factor} allows us to view the random variable $L$ as a deterministic function $T$ that acts on random quantities $W$. In Section \ref{sec:regularity}, we showed that guarantees that might be verified largely using knowledge of $T$ alone may be extend to probabilistic guarantees on the continuity of derived learning tasks. As demonstrated in Section \ref{sec:apps}, our regularity condition can be easy to show, even for systems that have many moving parts. 

We believe that our work provides an extension to existing universal approximation guarantees, and provides some additional insight into the empirical success of neural network function approximators. Indeed, by considering the contrapositive of our main result, we have shown that any learning target $Y$ that is not well-approximated by a neural network cannot be written as $f \circ L$ for \textit{any} well-behaved $L$. Thus, such functions must be more deeply pathological.

We see three main avenues for future work. First, while our constraint makes a guarantee on the continuity of maps $x \mapsto \ex{f \circ L \mid X = x}$, it makes no further guarantees (e.g. Lipschitz continuity, differentiability) that are also relevant to the performance of modern learning algorithms. A tighter constraint on $T$ that provides this guarantee in a similar randomness-agnostic fashion would extend our analysis in a meaningful way. 

We also believe that similar statements can be used to justify assumptions made in other domains, such as manifold learning, where it is assumed that the support of a data-generating process is concentrated about a well-behaved lower-dimensional manifold. Manifold learning algorithms such as UMAP, TSNE, and Spectral Methods \citep{mcinnes2018umap,maaten2008visualizing,belkin2002laplacian} are each built around subtly different assumptions on the manifold. Using the framework we develop in this paper, it might be possible to identify the assumptions that hold more generally and thus improve the performance of these algorithms. 

Finally, we believe that results similar to Theorem \ref{thm:stable} might be easy to show for a variety of other economic processes, such as kidney-exchanges \citep{roth2004kidney}, ride-sharing, and other algorithm-based marketplaces. Theorem \ref{thm:main} in conjunction with existing Universal Approximation Theorems then provides a useful formal guarantee on performance for learning tasks derived from such processes.

\acks{I would like to thank Franklyn Wang and Jacob Stavrianos for their assistance with Theorem \ref{thm:correction} and Lemma \ref{lem:extend} respectively.}

\newpage
\bibliography{refs}

\appendix
\newpage 
\section{Proof of Theorem \ref{thm:correction}}\label{proof:correction}
\textbf{Theorem \ref{thm:correction} }\textit{Let $L: \Omega \to \Theta$ be a data-generating process with a factorization $L = T \circ W$ as in Figure \ref{fig:diag}, and suppose that $\Gamma \subset \R^k$. Further, suppose that for every $i \in [k]$ and $x \in K$, the cumulative distribution function of $R_i$ (the $i$th component of $R = \pi_\Gamma \circ W$) conditioned on $X$ and $R_1, \dots, R_{i-1}$ is continuous. Then, there exists a decomposable factorization $L = T'\circ W'$. }\\

\begin{proof}
Let $\sett{\mu_{x,r_1,\dots,r_i}}_{x \in K, r_1,\dots,r_i \in \R}$ denote the regular conditional probability for $\mu$ given the random variables $(X, R_1, \dots, R_i)$. By premise, we have that the conditional cumulative distribution function for $R_i$
\[F_{x,r_1,\dots,r_{i-1}}(r) := \bra{\mu_{x,r_1,\dots,r_{i-1}} \circ R_i^{-1}}\pa{\sett{z \in \R \mid z \le r}}\]
is continuous. This function is invertible over the support of the distribution 
\[F_{x,r_1,\dots,r_{i-1}}^{-1}(c) := \sup\pa{\sett{z \in \R \mid F_{x,r_1,\dots,r_{i-1}}(z) \le c}}\]
For each $x \in K$, we define the map $I_x: \R^k \to [0,1]^k$ given by
\[I_x(r_1,\dots,r_k) = \pa{\bra{F_{x, r_1,\dots, r_{i-1}}(r_i)}}_{i=1}^k\]
Notice that $I_x$ is invertible, as we may write inductively, for any $(c_i) \in [0,1]^k$,
\[I_x^{-1}(c_1,\dots,c_k)_i = \begin{cases}F^{-1}_{x}(c_1) & i = 1\\F^{-1}_{x, I_x^{-1}(c_1,\dots,c_k)_1, \dots, I_x^{-1}(c_1,\dots,c_{k})_{i-1}}(c_i) & \text{else}\end{cases}\]
We then define the random variable
\begin{align*} 
R'(\omega) &:= I_{X(\omega)}(R(\omega))\\
&= \pa{\bra{F_{X(\omega), R_1(\omega),\dots, R_{i-1}(\omega)} \circ R_i}(\omega)}_{i=1}^k \in [0,1]^k
\end{align*}
given by mapping each $R_i$ to its corresponding conditional quantile. As $F_{X(\omega), R_1(\omega),\dots, R_{i-1}(\omega)}$ is surjective over $[0,1]$, since it is continuous, we have that for any $\omega \in \Omega$ and $r \in \R$, the conditional cumulative distribution function 
\begin{align*}
    &F'_{X(\omega), R_1(\omega),\dots, R_{i-1}(\omega)}(r)\\
    &:=\bra{\mu_{X(\omega), R_1(\omega),\dots, R_{i-1}(\omega)} \circ R_i^{\prime -1}}\pa{\sett{z \in \R \mid z \le r}}\\
    &= \bra{\mu_{X(\omega), R_1(\omega),\dots, R_{i-1}(\omega)} \circ R_i^{-1} \circ F^{-1}_{X(\omega), R_1(\omega),\dots, R_{i-1}(\omega)}}\pa{\sett{z \in \R \mid z \le r}}\\
    &= \bra{\mu_{X(\omega), R_1(\omega),\dots, R_{i-1}(\omega)} \circ R_i^{-1}}\pa{\sett{z \in \R \mid z \le F^{-1}_{X(\omega), R_1(\omega),\dots, R_{i-1}(\omega)}(r)}}\\
    &= \bra{F_{X(\omega), R_1(\omega),\dots, R_{i-1}(\omega)} \circ F^{-1}_{X(\omega), R_1(\omega),\dots, R_{i-1}(\omega)}}(r)\\
    &= r
\end{align*}
whence it follows that the conditional value of $R'_i$ is uniformly distributed on the interval. We now show that $R'$ is independent from $X$. As the Borel $\sigma$-algebra $\Bor([0,1]^k) = \bigotimes_{i=1}^k\Bor([0,1])$, and is thus generated by products of intervals $\prod_{i=1}^k [0,c_i]$, it suffices to show that $R'$ is independent of $X$ for such events. This can be seen as for any $A \subset \Bor(K)$ and $\prod_{i=1}^k [0,c_i] \in \bigotimes_{i=1}^k\Bor(\R)$, we have that

\begin{align*}
    &\mu\pa{X^{-1}(S) \cap R^{\prime -1}\pa{\prod_{i=1}^k [0,c_i]}} \\
    &= \int_A \bra{\mu_x \circ R^{\prime -1}}\pa{\prod_{i=1}^k [0,c_i]}d(\mu \circ X^{-1})(x)\\
    &= \int_A \int_{[0,c_1]} \cdots \int_{[0,c_k]} d\bra{\mu_{x,r_1,\dots,r_{k-1}} \circ R_k^{\prime -1}}(r_k) \cdots d\bra{\mu_{x} \circ R_2^{\prime -1}}(r_1)d(\mu \circ X^{-1})(x)\\
    &= \int_A \pa{\prod_{i=1}^k c_i} d(\mu \circ X^{-1})(x)\\
    &= \pa{\prod_{i=1}^k c_i} \mu\pa{X^{-1}(A)}\\
    &= \mu\pa{X^{-1}(A)} \int_K \int_{[0,c_1]} \cdots \int_{[0,c_k]} d\bra{\mu_{x,r_1,\dots,r_{k-1}} \circ R_k^{\prime -1}}(r_k) \cdots d\bra{\mu_{x} \circ R_2^{\prime -1}}(r_1)d(\mu \circ X^{-1})(x)\\
    &= \mu\pa{X^{-1}(A)} \int_K \bra{\mu_x \circ R^{\prime -1}}\pa{\prod_{i=1}^k [0,c_i]}d(\mu \circ X^{-1})(x)\\
    &= \mu\pa{X^{-1}(A)}\mu\pa{R^{\prime -1}\pa{\prod_{i=1}^k [0,c_i]}}
\end{align*}
as desired. We thus get a decomposable factorization $L = T' \circ W'$ where $W': \Omega \to K \times [0,1]^k$ is given by
\[W'(\omega) := (X(\omega), R'(\omega))\]
and $T': K \times [0,1]^k \to \Theta$ is given by
\[T'(x, c_1, \dots, c_k) := T(x, I_x^{-1}(c_1,\dots,c_k))\]
\end{proof}

\newpage

\section{Proof of Lemma \ref{lem:decompose}}\label{proof:decompose}
\textbf{Lemma \ref{lem:decompose} }\textit{Let $L = T \circ W$ be decomposable.  For any $x \in K$,
\[\mu_x \circ L^{-1} = \mu \circ R^{-1} \circ  T_x^{-1}\]
almost everywhere, where $\mu_x$ is a regular conditional probability for $\mu$ over $X$. }\\

\begin{proof}
Let $x \in K$ and let $\mu_x$ denote the corresponding conditional probability measure. Using the given factorization, we may write, for any $S \in \F_\Theta$,
\begin{align*}
    [\mu_x \circ L^{-1}](S) &= [\mu_x \circ W^{-1} \circ T^{-1}](S)\\
    &= [\mu_x \circ W^{-1}]\pa{T^{-1}(S)}\\
    &= [\mu_x \circ X^{-1} \times \mu_x \circ R^{-1}](T^{-1}(S))\\
    &= \int_{b \in [\pi_\Gamma \circ T^{-1}](S)} \int_{\sett{a \in K \mid (a,b) \in T^{-1}(S)}}d[\mu_x \circ X^{-1}]d[\mu_x \circ R^{-1} \circ \pi_\Gamma] \\
    &= \int_{(a,b) \in T^{-1}(S)} (\mathbbm{1}_{x = a}) d[\mu_x \circ R^{-1} \circ \pi_\Gamma] \\
    &= [\mu_x \circ R^{-1} \circ \pi_\Gamma]\pa{\sett{b \in \Gamma \mid (x,b) \in T^{-1}(S)}}\\
    &= [\mu_x \circ R^{-1} \circ T_x^{-1}](S)
\end{align*}
To complete the proof, we show that for any subset $H \in \F_\Gamma$, \[[\mu_x \circ R^{-1}](H) = [\mu \circ R^{-1}](H)\]
Notice that for any such $H$, and any $A \in \Bor(K)$, we have that
\begin{align*}
    \int_A [\mu \circ R^{-1}](H)d[\mu \circ X^{-1}] &= [\mu \circ R^{-1}](H)[\mu \circ X^{-1}](A)\\
    &= [\mu \circ X^{-1} \times \mu \circ R^{-1}](A \times H)\\
    &= [\mu \circ W^{-1}](H \times A)\\
    &= \mu\pa{X^{-1}(A) \cap R^{-1}(H)}\\
    &= \int_A [\mu_x \circ R^{-1}](H)d[\mu \circ X^{-1}](x)
\end{align*}
by Definition \ref{def:condprob}. As $\Omega$ and $K$ are both Radon, regular conditional probability is almost everywhere unique, whence it follows by the above that 
\[\mu_x \circ L^{-1} = \mu_x \circ R^{-1} \circ T_x^{-1} = \mu \circ R^{-1} \circ T_x^{-1}\]
almost everywhere, thus proving the desired result. 
\end{proof}

\newpage

\section{Proof of Lemma \ref{lem:jordan}}\label{proof:jordan}
\textbf{Lemma \ref{lem:jordan} }\textit{Let $J \in \F_\Theta$ be a Jordan-measurable subset, and let $\sett{\nu_t}_{t \in T}$ be a family of probability measures, each satisfying $\nu_t \ll \lambda^d$ and $\frac{d\nu_t}{d\lambda^d} \le D$. Then,
\[\lim_{\delta \to 0} \sup \sett{\nu_t\sett{\theta \in \R^d \setminus J \mid d(\theta, J) < \delta} \mid t \in T} = 0\]}
\begin{proof}
Noting that the sets are nested, we can see that
\begin{align*}
    \lim_{\delta \to 0} \lambda^d\pa{\sett{\theta \in \R^d \setminus J \mid d(\theta, J) < \delta}} &= \lambda^d\pa{\bigcap_{\delta > 0} \sett{\theta \in \R^d \setminus J \mid d(\theta, J) < \delta}} = \lambda^d(\partial J) = 0
\end{align*}
as the set $J$ is Jordan-measurable. It then follows that for any $\epsilon > 0$, there exists a $\tau$ such that for all $\delta < \tau$, $\lambda^d\pa{\sett{\theta \in \R^d \setminus J \mid d(\theta, J) < \delta}} < \frac{\epsilon}{D}$. Thus, for any $t \in T$,
\begin{align*} 
 \nu_t\pa{\sett{\theta \in \R^d \setminus J \mid d(\theta, J) < \delta}} &= \int_{\sett{\theta \in \R^d \setminus J \mid d(\theta, J) < \delta}} \frac{d\nu_t}{d\lambda^d} d\lambda^d\\
& \le D \lambda^d\pa{\sett{\theta \in \R^d \setminus J \mid d(\theta, J) < \delta}}\\
& < \epsilon
\end{align*}
whence the desired result follows.
\end{proof}

\newpage
\section{Proof of Lemma \ref{lem:extend}}\label{proof:extend}
\textbf{Lemma \ref{lem:extend} }\textit{ For any Borel set $S \in \Bor(\R^d)$,
\[\inf \sett{\lambda^d\pa{S \Delta J} \mid J \in \Bor(\R^d), \lambda^d(\partial J) = 0} = 0\]}

\begin{proof}
We show that for any $\epsilon > 0$, there exists a Jordan-measurable set $J$ such that $\lambda^d(S \Delta J) < \epsilon$. To see this, note that by the definition of the Lebesgue measure, there exists a countable collection of $d$-dimensional boxes $\sett{U_i}_{i=1}^\infty$ that cover $S$ such that $\lambda^d\pa{S \Delta \bigcup_{i=1}^\infty U_i} < \frac{\epsilon}{2}$. Notice that each box $U_i$ is Jordan-measurable, and so is any finite union of boxes. Thus, letting $J := \bigcup_{i=1}^N U_i$, it suffices to show that for some $N$, $\lambda^d\pa{S \Delta \bigcup_{i=1}^N U_i} < \epsilon$. We choose $N$ such that the tail sum $\sum_{i = N+1}^\infty \lambda^d(U_i) < \frac{\epsilon}{2}$. 
\begin{align*} 
\lambda^d\pa{S \Delta \bigcup_{i=1}^N U_i} &\le \lambda^d\pa{S \Delta \bigcup_{i=1}^\infty U_i} + \lambda^d\pa{\bigcup_{i=1}^\infty U_i \Delta \bigcup_{i=1}^N U_i} \\
&< \frac{\epsilon}{2} + \sum_{i = N+1}^\infty \lambda^d(U_i)\\
&< \epsilon
\end{align*}
as desired.
\end{proof}
\end{document}